\newcommand{\boldx}{\mathbf{x}}
\newcommand{\boldy}{\mathbf{y}}
\newcommand{\boldW}{\mathbf{W}}
\newcommand{\btheta}{\boldsymbol{\theta}}
\newcommand{\niceeq}{\,{=}\,}
\newcommand{\nicein}{\,{\in}\,}
\newcommand{\mcX}{\mathcal{X}}
\newcommand{\mcV}{\mathcal{V}}
\newcommand{\mcN}{\mathcal{N}}
\newcommand{\mcA}{\mathcal{A}}
\newcommand{\reals}{\mathbb{R}}
\newcommand{\naturals}{\mathbb{N}}
\newcommand{\given}{\,|\,}
\DeclareMathOperator*{\ginsert}{insert}
\newcommand{\stopsymb}{\langle\text{stop}\rangle}
\newcommand{\neset}{\{\nu^{(n)}_{1:T_n}\}_{n=1}^N}
\newcommand{\actset}{\mcA_{\mathrm{ins}}}
\newcommand{\gimodel}{\textsc{Full}}
\newcommand{\ltwort}{\textsc{LRT}}
\newtheorem{claim}{Claim}
\newtheorem{proposition}{Proposition}
\newtheorem{observation}{Observation}
\newcommand{\eps}{\ensuremath{\varepsilon}}
\title{Data-to-text Generation by Splicing Together Nearest Neighbors}
\author{Sam Wiseman\thanks{\, This work was done primarily while the first two authors were at TTI-Chicago.} \\ Duke University \\ {\small \texttt{swiseman@cs.duke.edu}}
 \And Arturs Backurs \\ Microsoft Research \\ {\small \texttt{arturs.backurs@microsoft.com}}
 \And Karl Stratos \\ Rutgers University \\ {\small \texttt{karl.stratos@rutgers.edu}}}
\begin{document}
\maketitle
\begin{abstract}
We propose to tackle data-to-text generation tasks by directly splicing together retrieved segments of text from ``neighbor'' source-target pairs. Unlike recent work that conditions on retrieved neighbors but generates text token-by-token, left-to-right, we learn a policy that directly manipulates segments of neighbor text, by inserting or replacing them in partially constructed generations. Standard techniques for training such a policy require an oracle derivation for each generation, and we prove that finding the shortest such derivation can be reduced to parsing under a particular weighted context-free grammar. We find that policies learned in this way perform on par with strong baselines in terms of automatic and human evaluation, but allow for more interpretable and controllable generation.
\end{abstract}

\section{Introduction}
There has been recent interest in text generation systems that make use of retrieved ``neighbors'' --- examples of good text retrieved from a database, perhaps paired with the source information on which these example texts condition --- 
on the hope that these neighbors might make a generation task easier, or the system more interpretable or controllable~\citep[\textit{inter alia}]{song2016two,weston2018retrieve,guu2018generating,zhang2018guiding,peng2019text}.

Whereas most work along these lines has adopted a conventional encoder-decoder approach, conditioning on the retrieved neighbors and then autoregressively generating text token-by-token from left to right, we instead propose to generate text by directly splicing together segments of text from retrieved neighbors. %See Figure~\ref{fig:example} and Section~\ref{sec:morederivations} for examples.
Generating in this way aligns with the intuition that in settings such as data-to-text generation it ought to be sufficient to retrieve sentences similar to the one that must be generated, and then merely change some details, such as names or dates. 

There are notable advantages to a generation-by-splicing approach. First, generation becomes more interpretable: it is always clear from which neighbor a particular piece of generated text derives, and it is also clear how these pieces have come together to form the generated text. Generation-by-splicing may also increase our control over the generated text, and we suspect that approaches that make clear the provenance of each piece of generated text (as ours does) will be useful in preventing text generation systems from emitting harmful or biased text~\citep[\textit{inter alia}]{sheng2019woman,wallace2019universal,gehman2020realtoxicityprompts}. That is, we might imagine preventing systems from emitting harmful or biased text by only allowing generation from approved neighbor examples. %We might also hope that generating by splicing affords more will also show that this approach allows for controlling generation by manipulating neighbors at test time. 

Methodologically, we implement this generation-by-splicing approach by training a policy to directly insert or replace spans of neighbor text at arbitrary positions within a partially constructed generation, and we define a generalized $\ginsert$ function capable of such manipulations in Section~\ref{sec:splicing}. We train
this policy with ``teacher forcing''~\citep{williams1989learning}, which requires, for each training example, an oracle sequence of $\ginsert$ actions that derive it. Accordingly, we define a shortest sequence of actions deriving a training generation from its neighbors to be an oracle one, and we prove that, given some neighbors, an oracle sequence of actions can be obtained by parsing under a particular weighted context-free grammar, introduced in Section~\ref{sec:WCFG}.

%We experiment with two table-to-text datasets and one headline generation dataset, and we find that policies trained on these oracle derivations are competitive with standard autoregressive policies that generate by copying neighbor tokens from left to right, though in all but one case neighbor-based policies are slightly inferior to a strong neighborless baseline. However, because our policies attempt to minimize the number of $\ginsert$ actions used, they generate in many fewer actions than conventional autoregressive token-by-token policies, and are faster. 

Empirically, we find our proposed approach yields text of comparable quality to strong baselines under automatic metrics and human evaluation on the E2E dataset~\citep{novikova2017e2e} and Wikibio datasets~\citep{lebret2016neural}, but with added interpretability and controllability (see Section~\ref{sec:interp}). Our reduction of minimum-insertion generation to WCFG parsing may also be of independent interest. Our code is available at \url{https://github.com/swiseman/neighbor-splicing}.

% outperforms previous work and baselines , we  Nonetheless, we believe that the interpretability, controllability, and speed offered by this approach 

% make it attractive in many situations, and that our reduction of minimum-insertion generation to WCFG parsing will allow for the development of something. 

\begin{figure*}[t!]
    \centering
    \includegraphics[scale=0.63]{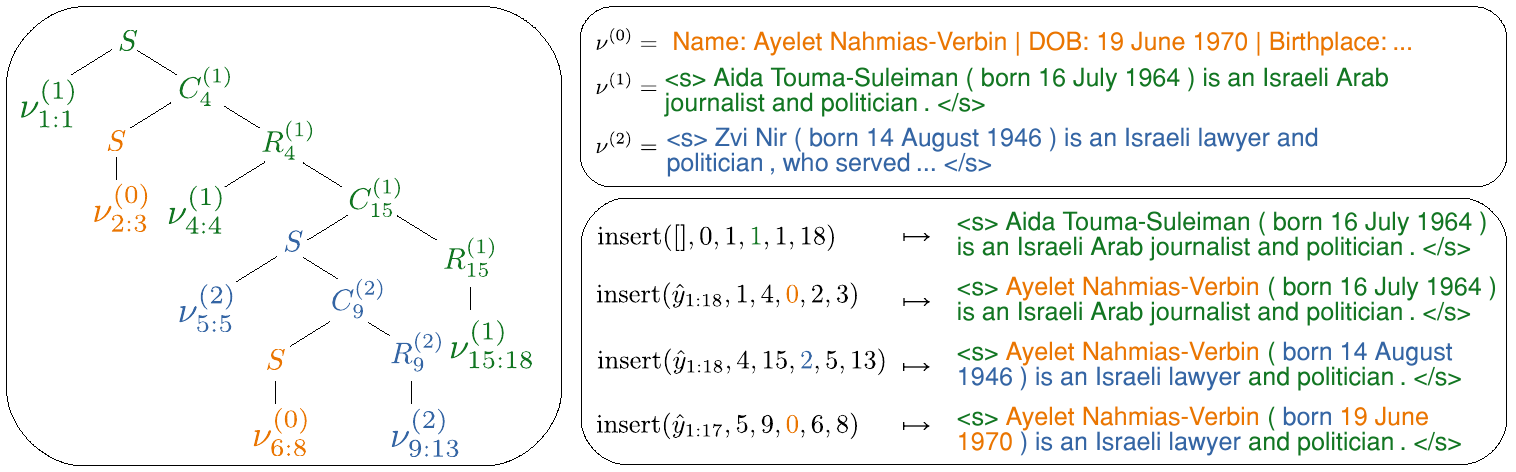}
    \caption{Deriving a sentence from the WikiBio dataset, ``Ayelet Nahmias-Verbin (born 19 June 1970) is an Israeli lawyer and politician.'' Top right: neighbor sequences $\nu^{(0)}, \nu^{(1)}, \nu^{(2)}$; $\nu^{(0)}$ is from the corresponding table. Bottom right: a sequence of $\ginsert$ operations (see Equation~\eqref{eq:insert}) deriving the sentence from the neighbors above. Left: the parse of the target sentence under the grammar in Section~\ref{sec:WCFG} corresponding to the derivation on the bottom right.}
    \label{fig:example}
\end{figure*}

\section{Background and Notation}
Conditional text generation tasks involve generating a sequence of tokens $\hat{y}_1, \ldots, \hat{y}_T \niceeq \hat{y}_{1:T}$ conditioned on some $x \nicein \mcX$, where each generated token $\hat{y}_t$ is from a vocabulary $\mcV$. We will consider in particular the task of table-to-text generation, where $x$ is some tabular data and $\hat{y}_{1:T}$ is a natural language description of it. %Th tasks, which require the generation of relatively formulaic text, appear to be the best candidates for being tackled with nearest neighbor-based methods.

For supervision, we will assume we have access to a dataset, which pairs an input $x$ with a \textit{true} corresponding reference text $y_{1:T_x} \nicein \mcV^{T_x}$ consisting of $T_x$ tokens. Since we are interested in nearest neighbor-based generation, we will also assume that along with each input $x$ we have a set $\mcN \niceeq \neset$ of $N$ neighbor sequences, with each $\nu^{(n)}_t \nicein \mcV$. We will be interested in learning to form $y_{1:T_x}$ from its corresponding $x$ and neighbor set $\mcN$ in a way that will be made more precise below. We note that finding an appropriate set of neighbor sequences to allow for successful generation with respect to an input $x$ is an interesting and challenging problem (see, e.g., \citet{hashimoto2018retrieve}), but for the purposes of our exposition we will assume these neighbor sequences are easy to obtain given only $x$ (and without knowledge of $y$). We give the details of our simple retrieval approach in Section~\ref{sec:experiments}. %In our experiments we use simple parameter-free methods based on $x$ 

\subsection{Imitation Learning for Text Generation}
\label{sec:ilbackground}
Much recent work views conditional text generation as implementing a policy $\pi : \mcX \times \mcV^* \rightarrow \mcA \cup \{\stopsymb\}$~\citep{bengio2015scheduled,ranzato2016sequence}; see~\citet{welleck2019non} %and \citet{daume2019imitation} 
for a recent review. That is, we view a generation algorithm as implementing a policy that consumes an input $x \nicein \mcX$ as well as a partially constructed output in the Kleene closure of $\mcV$, which we will refer to as a ``canvas'' \citep{stern2019insertion}, and which outputs either an action $a \nicein \mcA$ or a decision to stop. Taking action $a$ leads (deterministically in the case of text generation) to a new canvas, and so generation is accomplished by following $\pi$ from some distinguished start canvas until a $\stopsymb$ decision is made, and returning the resulting canvas. For example, sequence-to-sequence style generation~\citep{sutskever2014sequence,cho2014on} implements a policy $\pi$ that consumes $x$ and a canvas $\hat{y}_{1:M} \nicein \mcV^{M}$ representing a prefix, and produces either an action $a \nicein \mcA \niceeq \mcV$, or else a $\stopsymb$ action and generation terminates. When an action $a \nicein \mcV$ is chosen, this leads to the formation of a new prefix canvas $\hat{y}_{1:M} \cdot a$, where $\cdot$ is the concatenation operator.

Imitation learning of text generation policies conventionally proceeds by ``rolling in'' to a canvas $\hat{y}_{1:M}$ using a roll-in policy, and then training a parameterized policy $\pi_{\btheta}$ to mimic the actions of an oracle policy $\pi^*$ run from $\hat{y}_{1:M}$. The most common form of such training in the context of neural text generation is known as ``teacher forcing''~\citep{williams1989learning}, which simply amounts to using $\pi^*$ to roll-in to a canvas $y_{1:M}$, viewing $\pi_{\btheta}$ as a probabilistic classifier, and training $\pi_{\btheta}$ using the action $\pi^*(x, y_{1:M})$ as its target. 

We will adopt this policy-based perspective on conditional text generation, and we will attempt to learn a policy that generates text by splicing together text from retrieved neighbors. In order to do so, we will make use of a more general form of policy than that implemented by standard sequence-to-sequence models. Our policies will allow for both inserting an arbitrary span of neighbor text (rather than a single token) anywhere in a canvas, as well as for \textit{replacing} an arbitrary span of text in a canvas with a span of neighbor text, as we make more precise in the next section. Before doing so, we note that there has been much recent interest in generalizing the forms of policy used in generating text; see Section~\ref{sec:related} for references and a discussion. 

\section{Splicing Nearest Neighbors}
\label{sec:splicing}
 Given a canvas $\hat{y}_{1:M} \nicein \mcV^M$ and a set of neighbor sequences $\mcN \niceeq \neset$, we define a generalized insertion function, which forms a new canvas from $\hat{y}_{1:M}$. This generalized insertion function implements the following mapping

\vspace*{-0.5cm} 
\begin{align}\label{eq:insert}
   \small
    \ginsert(\hat{y}_{1:M}, i, j, n, k, l) \mapsto \hat{y}_{1:i} \cdot \nu^{(n)}_{k:l} \cdot \hat{y}_{j:M},
\end{align}

\vspace*{-0.2cm} 
\noindent where $\cdot$ is again the concatenation operator, the slice indexing is inclusive, $ 0 \leq i < j \leq M+1$,\footnote{We take $\hat{y}_{1:0}$ and $\hat{y}_{M+1:M}$ to be empty sequences.} and $1 \leq k \leq l \leq T_n$. Note that this generalized insertion function allows both for inserting a span into any position in the canvas (when $j \niceeq i+1$), as well as for \textit{replacing} a span anywhere in the canvas with another span (when $j > i+1$), which results in the removal of tokens from the canvas. Intuitively, this generalized insertion function attempts to capture a generation scheme where text is generated by making only minor insertions or replacements in some existing text. For example, we might imagine generating a new sentence by copying a neighbor sentence to our canvas, and then simply replacing the names or dates in this neighbor sentence to form a new sentence; see Figure~\ref{fig:example} for an example.

Having defined this insertion function, we can generate text with a policy that consumes an input $x$, a set of neighbors, and a canvas,\footnote{To conform with our generation policy definition above, we can view $\mcX$ as containing input and neighbor set pairs.} 
and outputs the arguments of the $\ginsert$ function, or else the $\stopsymb$ action. Thus, for a given canvas and neighbor set, we take our policy to produce actions in 

\vspace*{-0.5cm}
{\small
\begin{align*}
\actset(\hat{y}_{1:M}) \niceeq \{&(i, j, n, k, l) \nicein \naturals^5 \mid 0 \leq i < j \leq M{+}1, \\
&1 \leq n \leq N, 1 \leq k \leq l \leq T_n \}
\end{align*}
}

\vspace*{-0.6cm}
\noindent or else the $\stopsymb$ action. We show examples of generating with such a policy in Figure~\ref{fig:example} and Figure~\ref{fig:derivations}.

\subsection{An Oracle Policy}
\label{sec:oraclepolicy}
As described in Section~\ref{sec:ilbackground}, we are interested in learning a parameterized text generation policy $\pi_{\btheta}$. Since we would like to generate using the generalized $\ginsert$ function in Equation~\eqref{eq:insert}, we will attempt to learn a parameterized distribution $\pi_{\btheta}(\cdot \given x, \hat{y}_{1:M}, \mcN)$ over the arguments to this $\ginsert$ function given input $x$, canvas $\hat{y}_{1:M}$, and neighbor set $\mcN$, by training it with the one-hot action distribution $\pi^*(x, y_{1:M}, \mcN)$ as its target. In order to do so, however, we must first obtain an oracle policy $\pi^*$. That is, for each true output $y_{1:T_x}$ in our dataset, we require an oracle sequence of canvases paired with corresponding oracle actions in $\actset$, which derive $y_{1:T_x}$ from $x$ and $\mcN$. In this section we suggest an approach to obtaining these.

In what follows, we will assume that each word type represented in $y_{1:T_x}$ is also represented among the neighbor sequences; in practice, we can always ensure this is the case by using the following expanded neighbor set: $\mcN' \niceeq \neset \cup \mcV$, where the vocabulary $\mcV$ is viewed as containing spans of length one. Thus, policies will be able to emit any word in our vocabulary. %\setminus \bigcup_{n,t} \{\nu^{(n)}_t\}$. %$\neset \cup \{w \nicein \mcV \mid \nexists_{n,t}: w = \nu^{(n)}_t\}$.
%That is, we expand the neighbor set to include each word type in the vocabulary that does not occur in any neighbor.
Furthermore, because the source table $x$ itself will often also contain spans of words that might be used in forming $y_{1:T_x}$, going forward we will also assume $\mcN'$ includes these spans from $x$. 

In arriving at an oracle policy, we first note that given $\mcN'$ there will often be many sequences of actions in $\actset$ that derive the reference text $y_{1:T_x}$ from an empty canvas. For instance, we can simulate standard left-to-right, token-by-token generation with a sequence of actions $((i, i+1, n_i, k_i, k_i))_{i=0}^{T-1}$ such that $\nu^{(n_i)}_{k_i} \niceeq y_i$. However, other derivations, which insert or replace spans at arbitrary canvas locations, will often be available. We posit that derivations with fewer actions will be more interpretable, all else equal, and so we define our oracle policy to be that which derives $y_{1:T_x}$ from $\mcN'$ (starting from an empty canvas) in as few actions as possible. We show this optimization problem can be reduced to finding the lowest-cost parse of $y_{1:T_x}$ using a particular weighted context free grammar (WCFG)~\citep{salomaa1969probabilistic}, which can be done in polynomial time with, for instance, the CKY algorithm~\citep{kasami1966efficient,younger1967recognition,baker1979trainable}. 

\subsubsection{Reduction to WCFG Parsing} \label{sec:WCFG}
Consider the following WCFG in Chomsky Normal Form~\citep{chomsky1959certain}:

% \begin{alignat*}{3} %% original WCFG
%     [1] \quad &S &&\rightarrow Y^{(n)}_{k:l} \quad &&\forall n, k \leq l \\
%     [0] \quad&S &&\rightarrow S \, S \quad && \\
%     [1] \quad&S &&\rightarrow Y^{(n)}_{k:l} \, S \, R^{(n)}_s \quad &&\forall n, l < s\\
%     [0] \quad&R^{(n)}_s &&\rightarrow Y^{(n)}_{s:t} \quad &&\forall n, s \leq t \\
%     [0] \quad &R^{(n)}_s &&\rightarrow Y^{(n)}_{s:t} \, S \, R^{(n)}_{u} \quad &&\forall n, s \leq t < u,
% \end{alignat*}

\vspace*{-0.3cm}
\begin{alignat*}{3} %% in CNF
    % [1] \quad &S &&\rightarrow Y^{(n)}_{k:l} \quad &&\forall n, k \leq l \\
    [1] \quad &S &&\rightarrow \nu^{(n)}_{k:l} \quad &&\forall n, k \leq l \\
    [0] \quad&S &&\rightarrow S \, S \quad && \\
    % [1] \quad&S &&\rightarrow Y^{(n)}_{k:l} \, S \, R^{(n)}_s \quad &&\forall n, l < s\\
    %[1] \quad&S &&\rightarrow Y^{(n)}_{k:l} \, C^{(n)}_s \quad &&\forall n, l < s\\
    [1] \quad&S &&\rightarrow \nu^{(n)}_{k:l} \, C^{(n)}_s \quad &&\forall n, l < s\\
    [0] \quad&C^{(n)}_s &&\rightarrow S \, R^{(n)}_s \quad &&\forall n, s \\
    %[0] \quad&R^{(n)}_s &&\rightarrow Y^{(n)}_{s:t} \quad &&\forall n, s \leq t \\
    [0] \quad&R^{(n)}_s &&\rightarrow \nu^{(n)}_{s:t} \quad &&\forall n, s \leq t \\
    % [0] \quad &R^{(n)}_s &&\rightarrow Y^{(n)}_{s:t} \, S \, R^{(n)}_{u} \quad &&\forall n, s \leq t < u,
    % [0] \quad &R^{(n)}_s &&\rightarrow Y^{(n)}_{s:t} \, C^{(n)}_{u} \quad &&\forall n, s \leq t < u,
    [0] \quad &R^{(n)}_s &&\rightarrow \nu^{(n)}_{s:t} \, C^{(n)}_{u} \quad &&\forall n, s \leq t < u,
\end{alignat*}
where $S$ is the start non-terminal %, the non-terminal $Y^{(n)}_{k:l}$ yields $\nu^{(n)}_{k:l}$, 
and where the bracketed number gives the cost of the rule application. %\footnote{Without the CNF requirement we can obtain a smaller grammar by replacing the 3rd and 4th rules with $S \rightarrow Y^{(n)}_{k:l} \, S \, R^{(n)}_s$ and the last rule with $R^{(n)}_s \rightarrow Y^{(n)}_{s:t} \, S \, R^{(n)}_{u}$.} 
We can see that there is a cost (of 1) for introducing a new neighbor span with an $S$ non-terminal, but no cost for continuing with the remainder of a neighbor already introduced (as represented by the $C$ and $R$ non-terminals).

\begin{claim} \label{claim1}
Given neighbors $\neset$, the length of the shortest derivation of a sequence $y_{1:T_x}$ using actions in $\actset$ is equal to its lowest cost derivation under the WCFG above.
\end{claim}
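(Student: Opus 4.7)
The plan is to establish the claim by exhibiting a length-preserving correspondence between productive derivations (those in which no action's inserted tokens are later fully overwritten) and parses of $y$ under the WCFG, pairing each $\ginsert$ action with exactly one cost-$1$ rule. Note that the only cost-$1$ productions are $S \to Y^{(n)}_{k:l}$ and $S \to Y^{(n)}_{k:l}\, C^{(n)}_s$, whereas $S \to SS$ and all $C \to SR$ and $R \to Y\ldots$ productions have cost $0$, so the parse cost counts exactly the top-of-$S$ ``neighbor introductions,'' and it suffices to match these to actions. Without loss of generality the shortest derivation is productive, since dropping an action whose inserted tokens are entirely overwritten yields a shorter valid derivation with the same target.

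\textbf{Parse-to-derivation direction} (giving shortest derivation $\le$ lowest parse cost). Given a parse of cost $c$, I traverse its $S$ nodes in pre-order. At a cost-$1$ node with rule $S \to Y^{(n)}_{k:l}\, C^{(n)}_s$, chase the $C$-chain $C^{(n)}_{s_j} \to S\, R^{(n)}_{s_j}$, $R^{(n)}_{s_j} \to Y^{(n)}_{s_j:t_j}\, C^{(n)}_{s_{j+1}}$ (terminating at $R^{(n)}_{s_m} \to Y^{(n)}_{s_m:t_m}$) to collect the monotone neighbor intervals $[k,l], [s_1,t_1], \ldots, [s_m,t_m]$ the subtree consumes from neighbor $n$; emit one $\ginsert$ action that inserts the single contiguous span $\nu^{(n)}_{k:t_m}$ at the canvas offset where this subtree's yield must land. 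The gap neighbor positions $\nu^{(n)}_{l+1:s_1-1}, \ldots, \nu^{(n)}_{t_{m-1}+1:s_m-1}$ serve as placeholders that are overwritten by the actions emitted, immediately afterward in the pre-order, for the nested $S$ children appearing in each $C \to SR$. Cost-$1$ leaves $S \to Y^{(n)}_{k:l}$ emit pure insertions. One action per cost-$1$ rule yields a derivation of length $c$.

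\textbf{Derivation-to-parse direction} (giving lowest parse cost $\le$ shortest derivation). Given a length-$k$ productive derivation, tag each token of $y$ by the action that ultimately placed it. Because each action's inserted span is initially contiguous and each subsequent overwrite removes a contiguous range of canvas positions, hence a contiguous sub-interval of the action's remaining neighbor indices, action $a$'s surviving tokens in $y$ form runs $\nu^{(n_a)}_{\alpha_1:\beta_1}, \ldots, \nu^{(n_a)}_{\alpha_m:\beta_m}$ with $\alpha_1 \le \beta_1 < \alpha_2 \le \cdots \le \beta_m$ as neighbor indices and appearing in the same order in $y$. Assign $a$ the cost-$1$ rule $S \to Y^{(n_a)}_{\alpha_1:\beta_1}$ (if $m=1$), else $S \to Y^{(n_a)}_{\alpha_1:\beta_1}\, C^{(n_a)}_{\alpha_2}$ followed by the corresponding $(m-1)$-long cost-$0$ $C$-chain; recursively parse each of the $m-1$ gap sub-sequences of $y$; and combine top-level sibling subtrees with $S \to SS$. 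The side conditions $\beta_j < \alpha_{j+1}$ and $\alpha_j \le \beta_j$ exactly meet the grammar's $l < s$ and $s \le t$ requirements, so the parse is well-formed and has cost $k$.

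\textbf{Main obstacle: the nestedness lemma.} Both constructions rest on the key structural fact that, in a productive derivation, the envelope of each action in $y$ is a contiguous interval and the envelopes of any two distinct actions are either disjoint or nested; further, any tokens of action $B$ lying inside action $A$'s envelope all lie within a single gap between consecutive runs of $A$, so each gap of the recursion is a self-contained sub-problem. I would prove this by induction on derivation length: a new $\ginsert$ replaces a contiguous canvas range with a single new contiguous run, so (i) the new action's envelope is one interval and therefore either nests inside or is disjoint from each previous action's envelope; (ii) earlier actions only lose tokens to the overwrite, so each action's surviving runs remain a union of contiguous sub-intervals of its original neighbor span; and (iii) no crossing can be created because the overwrite cannot reintroduce any earlier action's token at a position where it was not before. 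The delicate case is when the overwrite straddles runs of multiple earlier actions: the new single contiguous run slots cleanly into the position of the straddled segment, so no prior run is split into more pieces than before and the laminar structure required for the recursive parse construction is preserved.
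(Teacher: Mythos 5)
Your proposal is correct and follows essentially the same route as the paper's proof: the same two simulation directions (one $\ginsert$ action per cost-$1$ rule, inserting the full contiguous neighbor span and recursively overwriting the gap placeholders in one direction; tagging each token of $y$ by its originating action and decomposing into monotone runs in the other), and your ``nestedness lemma'' is exactly the paper's non-interleaving observation. The only cosmetic difference is that you discharge unproductive actions by a WLOG up front, whereas the paper handles them implicitly by charging the grammar only for the actions whose labels survive in the final sequence.
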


\noindent We prove the above claim in \cref{sec:proof}.
The proof proceeds by two simulation arguments.
First, we show that, given a derivation of $y_{1:T_x}$ with a certain weight under the WCFG, we can simulate a subset of the derivations with a number of $\ginsert$ operations equal to the total weight of the derivations and still obtain $y_{1:T_x}$. This implies that the cost of the \emph{optimal} derivation under the WCFG is at least the cost of the \emph{optimal} number of $\ginsert$ operations.
Second, we show that, given a derivation of $y_{1:T_x}$ with a certain number of $\ginsert$ operations, we can simulate a subset of these operations with a cost-$1$ derivation of the grammar per $\ginsert$ operation. This implies that the optimal number of $\ginsert$ operations is at least the cost of the \emph{optimal} derivation according to the WCFG. Together, the two simulation arguments imply the claim. %This simulation is slightly more complicated and involves determining which $\ginsert$ operations are necessary to simulate with the grammar. The two simulations arguments together imply that the optimal WCFG cost and the minimum number of $\ginsert$ operations are equal.

\paragraph{Complexity} If $T$ is the maximum length of all sequences in $\{y_{1:T_x}\} \cup \mcN'$ and $|\mcN'| \niceeq N$, parsing under the above WCFG with the CKY algorithm is $O(NT^6)$. The runtime is dominated by matching the $S \rightarrow Y^{(n)}_{k:l} \, C^{(n)}_s$ rule; there are $O(NT^3)$ sequences that match the right-hand side (all $k \leq l < s$ for all $\nu^{(n)}$), and we must consider this rule for each span in $y_{1:T_x}$ and each split-point.

\paragraph{Obtaining the policy} Using Claim~\ref{claim1}, we obtain an oracle action sequence deriving $y_{1:T_x}$ from its neighbors $\mcN'$ by first computing the minimum-cost parse tree. As noted in Section~\ref{sec:oraclepolicy}, $\mcN'$ is guaranteed to contain any word-type in $y_{1:T_x}$. In practice, we ensure this by only adding word-types to $\mcN$ that are not already represented in some neighbor, so that computed oracle parses use the neighbor sequences rather than the vocabulary. Given the minimum-cost parse tree, we then obtain a sequence of $\ginsert$ actions by doing a depth-first left-to-right traversal of the tree.\footnote{While there is a derivation corresponding to \textit{each} depth-first traversal, the left-to-right traversal performed best.} In particular, we can obtain all the arguments for an $\ginsert$ operation after seeing all the children of its corresponding $S$ non-terminal. For example, in Figure~\ref{fig:example}, the insert operations on the bottom right follow the order in which $S$ non-terminals are encountered in a left-to-right, depth-first traversal of the tree on the left; the arguments of the operation that introduces $\nu^{(2)}$, for example, are determined by keeping track of the corresponding $S$'s distance from the left sentence-boundary and the length of the span it yields. We precompute these oracle derivations for each $(x, y_{1:T_x}, \mcN')$ triplet in our training corpus. %, and train with teacher forcing; we elaborate further in Section~\ref{sec:training}.

\subsection{Additional Oracle Policies}
We will refer to policies derived as above as ``\gimodel{}'' policies. While \gimodel{} policies minimize the number of $\ginsert$ operations used in deriving $y_{1:T_x}$, there are at least two other reasonable neighbor-based oracle policies that suggest themselves. One is the oracle policy that derives $y_{1:T_x}$ from left to right, one token at a time. %As noted above, this corresponds to a sequence of actions $((i, i+1, n_i, k_i, k_i))_{i=0}^{T-1}$ such that $\nu^{(n_i)}_{k_i} \niceeq y_i$. 
This policy is identical to that used in training sequence-to-sequence models, except each token comes from $\mcN'$. In particular, a generated token is always copied from a neighbor sequence if it can be. We will refer to this policy as ``\ltwort,'' for ``left-to-right, token-level.''

Another oracle policy one might consider would allow for inserting spans rather than words left-to-right, but like \gimodel{} policies would attempt to minimize the number of span insertion operations. While a greedy algorithm is sufficient for deriving such policies, in preliminary experiments we found them to consistently underperform both \gimodel{} and \ltwort{}, and so we do not consider them further.

% \paragraph{Left-to-right, span-level policies (\ltwors):} We consider the oracle policy which minimizes the number of insert actions under the constraint that $y_{1:T_x}$ is formed by concatenating neighbor spans $\nu^{(n)}_{k:l}$ from left to right. These policies give rise to action sequences of the form $((i_m, i_m+l_m - k_m+1, n_m, k_m, l_m))_{m=1}^M$ such that $\nu^{(n_m)}_{k_m:l_m} \niceeq y_{i_m:i_m + l_m - k_m- 1}$ and $i_{m+1} = i_m + l_m - k_m$. Unlike \gimodel{} policies, an oracle \ltwors{} policy can be obtained by greedily selecting spans in $\mcN'$ that agree with $y_{1:T_x}$ from left to right. (To see this, suppose an L2RS action sequence $s$ with a non-greedy $m$th action is shorter than any greedy one. A sequence that replaces this $m$th action with a greedy one and then updates the left boundary of a subsequent action to account for the greed is at least as short as $s$, contradicting its optimality).

% \paragraph{Left-to-right, token-level policies (\ltwort):} 

\section{Models, Training, and Generation}
%While we might prefer to learn a distribution directly over arguments to the $\ginsert$ function, this would require directly parameterizing a distribution over an impractically large number of combinations.
To avoid directly parameterizing a distribution over the impractically large number of combinations of arguments to the $\ginsert$ function, we factorize the distribution over its arguments as

\vspace*{-0.3cm}
{%\small
\begin{align} \label{eq:factorization}
\pi_{\btheta}(i, j, n, k, l) &= \pi_{\btheta}(j, l \given i, n, k) \times \pi_{\btheta}(i, n, k),
\end{align}
}

\vspace*{-0.3cm}
\noindent where we have left out the explicit conditioning on $x$, $\hat{y}_{1:M}$, and $\mcN'$ for brevity. Thus, our policy first predicts an insertion of token $\nu^{(n)}_k$ after the canvas token $\hat{y}_i$. Conditioned on this, the policy then predicts the final token $\nu^{(n)}_l$ of the inserted span, and which canvas token $\hat{y}_j$ immediately follows it.

More concretely, we obtain token-level representations $\boldx_1, \ldots, \boldx_S$ and $\hat{\boldy}_0, \ldots, \hat{\boldy}_{M+1}$, all in $\reals^d$, of source sequence $x \niceeq x_{1:S}$ and of canvas sequence $\hat{y}_{1:M}$, padded on each side with a special token, by feeding them to an encoder-decoder style transformer~\citep{vaswani2017attention} with no causal masking. We obtain neighbor token representations $\boldsymbol{\nu}^{(n)}_k$ by feeding neighbor sequences through the same encoder transformer that consumes $x$. We provide additional architectural details in Appendix~\ref{sec:hypers}. Viewing the source sequence $x$ as the $0$th neighbor, we then define

\vspace*{-0.4cm}
{ %\small
\begin{align*}
    \pi_{\btheta}(i, n, k) \propto \begin{cases} \exp(\hat{\boldy}_{i-1}^{\top} \boldW_1 \boldW_0 \, \boldx_k) &\text{if $n \niceeq 0$} \\
                                                 \exp(\hat{\boldy}_{i-1}^{\top} \boldW_1 \boldW_2 \, \boldsymbol{\nu}^{(n)}_k) &\text{if $n \, {>} \, 0$,}
                                    \end{cases}
\end{align*}
}

\vspace*{-0.2cm}
\noindent where the normalization is over all pairings of a canvas token with either a neighbor or source token, and where $\boldW_0$, $\boldW_1$, and $\boldW_2$, all in $\reals^{d\times d}$, are learnable transformations. Similarly, we let

\vspace*{-0.6cm}
{ %\small
\begin{align*}
    \pi_{\btheta}(j, l &\given i, n, k) \propto \\
    &\begin{cases} \exp(\hat{\boldy}_{j+1}^{\top} \boldW_4 \boldW_3 \, \boldx_l) &\text{if $n \niceeq 0$} \\
                                                 \exp(\hat{\boldy}_{j+1}^{\top} \boldW_4 \boldW_5 \, \boldsymbol{\nu}^{(n)}_l) &\text{if $n \, {>} \, 0$,}
                                    \end{cases}
\end{align*}
}

\vspace*{-0.2cm}
\noindent where now the normalization only considers pairings of the $j$th canvas token with the $l$th neighbor or source token, where $j > i, l \geq k$. $\boldW_3$, $\boldW_4$, and $\boldW_5$ are again learnable and in $\reals^{d\times d}$.

% While the above parameterizations are appropriate for \gimodel{} policies, \ltwors{} and \ltwort{} policies always insert after the most recently inserted span or token (respectively). As such, for \ltwors{} we only require the policies $\pi_{\btheta}(l \given i, n, k) \times \pi_{\btheta}(n, k \given i)$, which are computed as above, using just the current $\hat{\boldy}_{i-1}$ in the equations above, and normalizing only over the applicable token options. 
While the above holds for \gimodel{} policies, \ltwort{} policies always insert after the most recently inserted token, and so they require only a $\pi_{\btheta}(n, k \given i)$ policy, and only $\boldW_0$ and $\boldW_2$ transformations.

\subsection{Training}
\label{sec:training}
As noted in Section~\ref{sec:oraclepolicy}, we propose to train our policies to imitate the oracle derivations obtained by the CKY parse, using teacher-forcing. Suppose that, for a given (oracle) canvas $y_{1:M}$ and set of neighbors $\mcN'$, the oracle next-action obtained from the parse is $(i^*, j^*, n^*, k^*, l^*)$. Since there may be multiple spans in $\mcN'$ that are identical to $\nu^{(n^*)}_{k^*:l^*}$, we train the $\pi_{\btheta}(i, n, k)$ policy to minimize

\vspace*{-0.3cm}
{ %\small
\begin{align} \label{eq:loss1}
    - \log \sum_{\{(n, k, l) \mid \nu^{(n)}_{k:l} = \nu^{(n^*)}_{k^*:l^*} \}} \pi_{\btheta}(i^*, n, k).
\end{align}
}

\vspace*{-0.3cm}
%\noindent which is similar to a ``coaching''-style oracle~\citep{he2012imitation}, where the best oracle action under the learner's current policy is selected as the target.
\nocite{he2012imitation} 
The $\pi_{\btheta}(j, l \given i, n, k)$ policy is simply trained to minimize

\vspace*{-0.4cm}
{ %\small
\begin{align} \label{eq:loss2}
    - \log \pi_{\btheta}(j^*, l^* \given i^*, n^*, k^*),
\end{align}
}

\vspace*{-0.4cm}
\noindent since there is one correct target given $i^*, n^*, k^*$.

Training proceeds by sampling a mini-batch of examples and their derivations, and minimizing the sums of the losses~\eqref{eq:loss1} and~\eqref{eq:loss2} over each action in each derivation, divided by the mini-batch size. %To minimize memory consumption, for each action in each derivation we form the policy distributions above by randomly sampling only $R$ of the $N$ neighbor sequences in $\mcN'$, in addition to the oracle one. 

% however, there may be multiple instances of the same orac and Since there may be multiple 
% Given a , there may be many spans
% Stuff about getting neighbors, sampling them, and the actual losses.

\subsection{Generation}
\label{sec:generation}
For \gimodel{} models, we generate with beam search, following the factorization in Equation~\ref{eq:factorization}. At each iteration, the beam first contains the top-$K$ partial hypotheses that can be constructed by predicting the $i, n, k$ arguments to the $\ginsert$ function given the current canvas and neighbors. Given these, the remaining $j, l$ arguments are predicted, and the top $K$ of these are kept for the next iteration. We search up to a maximum number of actions, and in computing the final score of a hypothesis, we average the $\pi_{\btheta}$ log probabilities over all the actions taken to construct the hypothesis (rather than summing). 

For \ltwort{} models, we generate with standard left-to-right, token-level beam search. We note that in this setting it is common to marginalize over all occurrences of a word-type (e.g., among neighbors or in the table) in calculating its probability. While this generally improves performance (see below), it also hurts interpretability, since it is no longer clear which precise neighbor or source token gives rise to a predicted token. Below we report results in both the standard marginalization setting, and in a no-marginalization (``no-marg'') setting.

% \paragraph{Complexity:} If $G$ total $\ginsert$ operations are used, and if there are $N$ neighbors and all sequences are of length at most $T$, beam searching as described above is $O(KG(T(TN + V_0 + T)))$, where $V_0$ is the size of the set of output words not contained in any derivation. On the other hand, beam-search with a standard sequence-to-sequence model with an output vocabulary of size $V$ is $O(KTV)$, and beam-search with an \ltwort{} policy is $O(KT(TN+V_0))$.

%We found the \gimodel{} and \ltwors{} policies to be sensitive to the number of neighbors included at generation time, and we tune this number as an additional hyperparameter; see Table~\ref{tab:parentresults}. See the appendix for additional hyperparameter details.

\section{Experiments}
\label{sec:experiments}
Our experiments are designed to test the quality of the text produced under \gimodel{} policies and \ltwort{} policies, as well as whether such policies allow for more controllable or interpretable generation.

\paragraph{Datasets} We expect our approach to work best for tasks where different generations commonly share surface characteristics. Table-to-text tasks meet this requirement, and are accordingly often used to evaluate generation that makes use of retrieved neighbors~\citep{peng2019text,lin2020record} or induced templates~\citep{wiseman2018learning,li2020posterior}. Following recent work, we evaluate on the E2E~\citep{novikova2017e2e} and WikiBio~\citep{lebret2016neural}  datasets. %, and the Gigaword~\citep{graff2003english} headline generation dataset of \citet{rush2015a}. %A priori, we might expect the Gigaword dataset to be the most challenging, since neighbors must be retrieved based on article text (rather than tables), and because it exhibits more variability than the table-to-text datasets due to its size ($\approx$ 3.8 million examples). Our results confirm this hypothesis.

\paragraph{Preprocessing}
We whitespace-tokenize the text, and mask spans in neighbor sequences that appear in their corresponding sources, which discourages derivations from copying content words from neighbors. We pad each $y$ and $\nu^{(n)}$ sequence with beginning- and end-of-sequence tokens, which encourages derivations that insert into the middle of sequences, rather than merely concatenating spans. %For the Gigaword training set, we filter duplicate source-target pairs, leaving about 3.4 million examples.

\paragraph{Obtaining Neighbors} We precompute neighbors for each training example, taking the top-scoring 20 neighbors for each example in the training data (excluding itself) under a simple score $s(\cdot, \cdot)$ defined over pairs of inputs in $\mcX$. For the E2E and WikiBio datasets, we define $s(x, x') \niceeq F_1(\mathrm{fields}(x), \mathrm{fields}(x')) + 0.1 F_1(\mathrm{values}(x),\mathrm{values}(x'))$, where $\mathrm{fields}$ extracts the field-types (e.g., ``name'') from the table $x$, $\mathrm{values}$ extracts the unigrams that appear as values in $x$, and $F_1$ is the $F_1$-score. %For headline generation, $s(x, x')$ is the Okapi BM25 score~\citep{robertson1995okapi} as implemented in the \texttt{pyserini} library~\citep{lin2021pyserini}. 

\paragraph{Baselines}
We compare \gimodel{} policies to \ltwort{} policies, to a transformer-based sequence-to-sequence model with a copy mechanism~\citep{gu2016incorporating} that uses no retrieved neighbors (henceforth ``S2S+copy''), and to recent models from the literature (see below). The S2S+copy model uses a generation vocabulary limited to the 30k most frequent target words. The neighbor-based policies, on the other hand, are limited to generating (rather than copying) only from a much smaller vocabulary consisting of target words that occur at least 50 times in the training set and which cannot be obtained from the target's corresponding neighbors. 

\paragraph{Additional Details}  All models are implemented using 6-layer transformer encoders and decoders, with model dimension 420, 7 attention heads, and feed-forward dimension 650;\footnote{We use the \texttt{huggingface}~\citep{wolf2020transformers} implementation of the BART~\citep{lewis2020bart} architecture, but with no pretraining.} all models are trained from scratch. We train with Adam~\citep{kingma2015adam,loshchilov2018decoupled}, using linear learning-rate warm-up and square-root decay as in \citet{devlin2019bert}, until validation loss stops decreasing. We generate with beam search (see Section~\ref{sec:generation}), and neighbor-based models use 20 neighbors at test time, just as at training time. We discuss hyperparameters and tuning in Appendix~\ref{sec:hypers}. We include sample generations from all systems in Appendix~\ref{sec:samplegens}, and additional visualizations of some \gimodel{} generations in Figure~\ref{fig:derivations}.

\begin{table}[t!]
\small
\centering
\begin{tabular}{@{}lccccc@{}}
\toprule
\textbf{E2E} & BLEU & NIST & RG & CID & MET  \\
\midrule
\gimodel & 70.5 & 9.54 & 76.0 & 2.37 & 49.6 \\
%\ltwors & 57.3 & 7.99 & 63.0 & 1.79 & 41.7 \\
\ltwort-no-marg & 55.8 & 7.39  & 63.7 & 1.68 & 41.0 \\
\ltwort & 68.1 & 8.83 & 70.0 & 2.38 & 46.2 \\
\midrule
S2S+copy & 64.7 & 8.26 & 69.1 & 2.22 & 43.7 \\
Li \& Rush & 67.1 & 8.52 & 68.7 & 2.24 & 45.4 \\
KGPT & 68.1 & -    & 70.9 & - & 45.8 \\
\bottomrule
\toprule
\textbf{WB} & BLEU & NIST & RG-4 & & \\ 
\midrule
\gimodel & 43.5 & 9.59 & 41.4 &  &  \\
%\ltwors & 40.5 & 9.49 & 35.7 &  &  \\
\ltwort-no-marg &  45.4  &  10.16   &  39.6 & & \\
\ltwort & 45.7 & 9.99 & 44.0 &  &  \\
\midrule
S2S+copy  & 45.4 & 9.72 & 44.6 & & \\
Peng et al. & 44.1 & - & 41.1 &  &  \\
Li \& Rush   & 44.7 & 9.92 & 43.3 & \\
KGPT   & 45.1 & -    &  - & \\
% \toprule
% \textbf{GW} & RG-1 & RG-2 & RG-L & & \\
% \midrule
% \gimodel &  &  &  &  &  \\
% %\ltwors & 35.8 & 16.5 & 33.0 &  &  \\
% \ltwort &  &  &  &  &  \\ 
% \midrule
% S2S+copy & 37.0 & 18.2 & 34.4 &  & \\
% Peng et al. & 37.3 & 18.5 & 34.7 &  &  \\
\bottomrule
\end{tabular}
\caption{Standard automatic evaluation metrics for the E2E dataset (top) and WikiBio dataset (bottom). Baselines include our own transformer sequence-to-sequence-with-copy model (``S2S+copy''), and the models of \citet{li2020posterior}, \citet{peng2019text}, and \citet[``KGPT'']{chen20kgpt}.}
% \caption{Top: BLEU~\citep{papineni02bleu}, NIST, ROUGE~\citep{lin2004rouge}, CIDEr~\citep{vedantam2015cider} and METEOR~\citep{banerjee2005meteor} performance on the E2E dataset; bottom: BLEU, NIST, and ROUGE-4 performance on the WikiBio dataset. Baselines include our own transformer sequence-to-sequence-with-copy model (``S2S+copy''), the interpretable model of \citet{li2020posterior}, the retrieval-based model of \citet{peng2019text}, and the pretrained KGPT model of \cite{chen20kgpt}.}
\label{tab:autoresults}
\end{table}

% \begin{table}[t!]
% \small
% \centering
% \begin{tabular}{@{}lcc@{}}
% \toprule
% \textbf{E2E} & Par-P & Par-R \\
% \midrule
% \gimodel & 64.4 & 58.4 \\
% \ltwors & 53.4 & 57.3 \\
% \ltwort & 61.0 & 61.5 \\
% S2S (ours) & 57.3 & 60.9 \\
% \toprule
% \textbf{WB} &  & \\ 
% \midrule
% \gimodel & 75.8 & 42.6  \\
% \ltwors & 73.3 & 42.2  \\
% % \ltwort & 78.0 & 44.0  \\ % with prototypes
% \ltwort & 77.4 & 43.8  \\
% S2S (ours) & 78.1 & 43.7 \\
% \toprule
% \textbf{GW} & & \\
% \midrule
% \gimodel & N/A & N/A \\
% \ltwors & N/A & N/A  \\
% \ltwort & N/A & N/A  \\
% S2S (ours) & N/A & N/A \\
% \bottomrule
% \end{tabular}
% \hspace*{0.3cm}
% \begin{tabular}{@{}rrr@{}}
% \toprule
% \# ins & s/gen & \# ne\\
% \midrule
% 8.7 & 0.294 & 5\\
% 13.6 & 0.673 & 100\\
% 23.7 & 0.986 & 100\\
% 24.6 & 0.359 & 0 \\
% \toprule
% & \\
% \midrule
% 6.9 & 0.625 & 30 \\
% 12.0 & 1.003 & 5 \\
% 21.6 & 2.149 & 100 \\
% 20.7 & 0.788 & 0 \\
% \toprule
% & \\
% \midrule
% 4.2 & 0.151 & 110 \\
% 5.6 & 0.220 & 110 \\
% 9.2 &  0.571  &  200\\
% 7.5 & 0.210 & 0 \\
% \bottomrule
% \end{tabular}
% \caption{Left sub-table: PARENT~\citep{dhingra2019handling} precision and recall results on E2E and WikiBio test datasets. For WikiBio, we randomly select 5k examples to evaluate with PARENT. Right sub-table: average number of insertions per generation, seconds per generation, and number of test-time neighbors used, for each dataset and type of policy. Results average over 200 generations on the same RTX 2080 Ti GPU.}
% \label{tab:parentresults}
% \end{table}

\begin{table}[t!]
\small
\centering
\begin{tabular}{lccc}
\toprule
\textbf{E2E} & Natural & Faithful & Informative \\
\midrule
\gimodel & 3.87 & 3.97 & 3.89\\
\ltwort  & 3.75 & 3.94 & 3.94 \\
S2S+copy & 3.87 & 3.94 & 3.81 \\
\bottomrule
\toprule
\textbf{WB} &  &  &\\ 
\midrule
\gimodel & 3.69 & 3.52   & 3.27 \\
\ltwort  & 3.83 & 3.74   & 3.37 \\
S2S+copy & 3.75 & 3.75   & 3.40 \\
% \toprule
% \textbf{GW} &  & \\
% \midrule
% \gimodel &  &   \\
% \ltwort &  &  \\ 
% S2S+copy &  & \\
\bottomrule
\end{tabular}
\caption{Average rating (on 1-5 Likert scale) of generations' naturalness, faithfulness, and informativeness, according to crowd-workers. No pairwise differences are significant under a Tukey HSD test.}
\label{tab:humanresults}
\end{table}

\subsection{Quality Evaluation}
We first evaluate our models and baselines using the standard automatic metrics associated with each dataset, including BLEU~\citep{papineni02bleu}, NIST, ROUGE~\citep{lin2004rouge}, CIDEr~\citep{vedantam2015cider} and METEOR~\citep{banerjee2005meteor}, in Table~\ref{tab:autoresults}. There we also compare  with the model of ~\citet{peng2019text}, which uses retrieved neighbors, and of~\citet{li2020posterior}, which produces interpretable segmentations, as well as with the model of \citet{chen20kgpt} (``KGPT'' in tables), which is a fine-tuned, large pretrained model, and which we take to be close to the state of the art.

We first note that our baselines are quite strong, largely outperforming previous work, including large pretrained models. In the case of E2E, we find that the \gimodel{} model slightly outperforms these strong baselines and attains, we believe, state-of-the-art performance in the setting where no pretrained models or data augmentation is used. (See \citet{chang2021neural} for even better results without these restrictions). On WikiBio, however, \gimodel{} slightly underperforms the strongest baselines.

% In terms of neighbor-based policies, we find in general that the \gimodel{} policies slightly underperform the \ltwort{} policies, though are competitive, and that both are roughly on par with previous work in the table-to-text setting. The \ltwors{} policies, on the other hand, appear to be inferior. We also find that all neighbor-based policies are slightly outperformed by the S2S baseline, except on the E2E dataset. Indeed, on the E2E dataset (perhaps because of the ease of retrieving similar neighbors), the \gimodel{} policies are best, and outperform, we believe, the state of the art. On the Gigaword dataset, again perhaps due to the difficulty of obtaining good neighbors, the neighbor-based policies are least competitive with prior work. We show sample generations for each dataset in Appendix~\ref{sec:samples}.

% In the left sub-table of Table~\ref{tab:parentresults} we further evaluate the table-to-text policies using the PARENT metric~\citep{dhingra2019handling}, which has been shown to correlate well with human judgments of factualness. We find that \ltwort{} and S2S policies outperform \gimodel{} ones slightly on the WikiBio data, whereas on the E2E data the \gimodel{} policies are more precise, but have worse recall. 

\paragraph{Human Evaluation} In Table~\ref{tab:humanresults} we show the (average) results of a human evaluation conducted following the methodology described in \citet{reiterblog}. We ask crowd-workers on Amazon Mechanical Turk to score generations in terms of their naturalness, their faithfulness to the source table, and their informativeness, on a 5-point Likert scale. (Note that following \citet{dhingra2019handling} we ask about informativeness rather than usefulness). We score a total of 45 random examples from each test dataset, with each generation being rated by 3 crowd-workers, and each crowd-worker seeing a generation from each system. %The inter-annotater score is \sam{FILL ME IN}.
We ran multi-way ANOVAs with system-type (i.e., \gimodel{}, \ltwort{}, or S2S+copy), example index, and crowd-worker-id as independent variables, and the rating as the dependent variable. 

The only significant interaction involving system-type was with respect to ``faithfulness'' on the WikiBio dataset ($p < 0.018$), though this does \textit{not} reflect a necessary correction accounting for the multiple comparisons implied by crowd-workers rating along 3 dimensions. Furthermore, under Tukey's HSD test no significant pairwise (i.e., between system pairs) interactions were found in any setting. Thus, we find no significant difference between any system pairs according to crowd-workers, although (as with the automatic metrics) \gimodel{} performs slightly better on E2E and worse on WikiBio.
%(Also note that even the ``faithfulness'' result is not significant under a Bonferroni adjustment for the three rated dimensions). 
We give the precise $p$-values as well as more details about the questions crowd-workers were asked in Appendix~\ref{sec:heval}. 

We also conduct a manual analysis of the faithfulness errors made by \gimodel{} generations in Appendix~\ref{sec:wikierrors}; we generally find that \gimodel{} generations do not hallucinate more than S2S+Copy generations (the most faithful generations according to crowd-workers), but they do more frequently contradict information in the source table. This generally occurs when a span containing information that contradicts the table is copied to the canvas, and this information is not subsequently replaced; see Appendix~\ref{sec:wikierrors} for more details and a discussion.

%  In Appendix~\ref{sec:wikierrors}, \todo{Write down error analysis for real} we conduct a manual analysis of the WikiBio examples where crowd-workers found \gimodel{} to be less faithful than S2S+copy (the model with the best faithfulness rating); interestingly, we find that nearly half of these faithfulness ``errors'' involve \gimodel{} failing to generate information that is merely \textit{implicit} in the table (such as a person's gender given their name, or their area of study given their thesis topic). 

\subsection{Interpretability Evaluation}
\label{sec:interp}
We first emphasize that, on an intuitive level, we believe \gimodel{} policies lead to significantly more interpretable generation than token-level policies. This is because \gimodel{} policies give an explicit (and often short) span-based derivation of a generated text in terms of neighbor text. We show visualizations of two randomly chosen generations from the WikiBio validation set, along with their derivations, in Figure~\ref{fig:derivations}. 

% We first attempt to emphasize, in a qualitative way, that the \gimodel{} derivations are interpretable, by showing some random derivations in Figure~\ref{fig:derivations}. We believe these are significantly more interpretable than a token-level generation, since \sam{something something}.

However, it is difficult to precisely quantify the interpretability of a text generation model, and so we now quantify several aspects of our models' predictions that presumably correlate with their interpretability. Table~\ref{tab:interp} shows the average length of the derivations (i.e., how many $\ginsert$ operations are required to form a generation), the average number of neighbors used in forming a prediction, and the percentage of generated tokens copied from a neighbor or the source $x$ (rather than generated from the model's output vocabulary) for \gimodel{} and \ltwort{}-no-marg policies over 500 randomly-chosen examples from the E2E and WikiBio validation-sets. All else equal, we expect fewer $\ginsert$ operations, fewer neighbors, and more tokens copied from neighbors (resp.) to correlate with better interpretability. We find that \gimodel{} generations require many fewer $\ginsert$ operations and distinct neighbors per generation on average than \ltwort{} generations, although they use their output-vocabulary slightly more than \ltwort{}-no-marg. Note that we use \ltwort{}-no-marg for this comparison because marginalization obscures whether a predicted token is from a neighbor.

\begin{table}[t!]
\small
\centering
% \begin{tabular}{@{}lccc@{}}
% \toprule
% & \# Inserts & \# Neighbors & \% Tokens Copied \\
% & \gimodel{} / \ltwort{} & \gimodel{} / \ltwort{} & \gimodel{} / \ltwort{} \\
% \midrule
% E2E      &     7.1 / 28.8      &   1.02 / 1.72    &  99.5 / 99.6   \\ 
% WB       &     7.5 / 20.9   &   1.39 / 1.46   &   95.7 / 98.0  \\
% \bottomrule
% \end{tabular}
\begin{tabular}{lcc}
\toprule
                & \gimodel{}         & \ltwort{}-no-marg \\ % & \ltwort{} \\
                & E2E / WB           & E2E / WB         \\ % & E2E / WB \\
\midrule
\# Inserts      &     6.9 / 7.4      &   24.7 / 25.8     \\ %  &  28.8 / 20.9   \\
\# Neighbors    &     1.1 / 1.4    &  5.5  / 5.6     \\ % &   - / -  \\
\% Tok. Copied  &     99.3 / 95.3    &   99.8  / 96.4     \\ % &   - / -  \\
\bottomrule
\end{tabular}
\caption{Number of inserts, number of neighbors used, and percentage of generation tokens from a neighbor, averaged over 500 random examples from the E2E and Wikibio validation sets.} % The last two metrics are not meaningful for \ltwort{} when marginalizing, but it averages 28.8 and 20.9 inserts on E2E and WikiBio (resp.) }
\label{tab:interp}
\end{table}

% above, inserts do not count <eos> prediction. would be 1  more for each of Full and L2RT if we did.

\begin{figure}[t!]
    \centering
    \hspace*{-0.5cm}
    \includegraphics[scale=0.39]{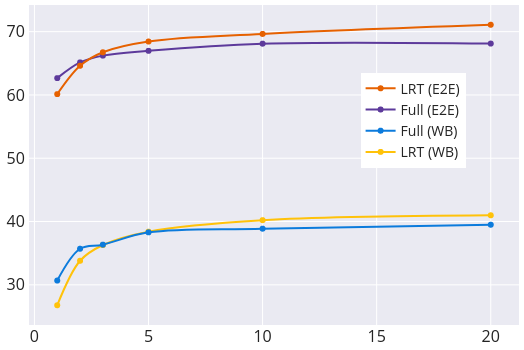}
    \caption{ROUGE validation performance on WikiBio and E2E, by number of neighbors used at test time.}
    \label{fig:neighborcurve}
\end{figure}

The fact that \gimodel{} policies use so few distinct neighbors per example motivates asking how well these policies perform at test time when given fewer neighbors than they are trained with (namely, 20 in all experiments). We plot the average validation ROUGE of \gimodel{} and \ltwort{} for both datasets (using ROUGE-4 for WikiBio and ROUGE-L for E2E, as is conventional) against the number of neighbors used at generation time in Figure~\ref{fig:neighborcurve}. We see that while using fewer neighbors hurts both types of policies, \gimodel{} outperforms \ltwort{} for very few neighbors.

\paragraph{Controllability} Another approach to evaluating the interpretablility of a model is to use our understanding of the model's prediction process to control it, and then evaluate controllability. In Appendix~\ref{sec:casestudy} we describe, as a case study, attempting to control the number of sentences used in E2E dataset generations by controlling the neighbors; we find that \gimodel{} significantly outperforms \ltwort{} policies in ensuring that generations have at least three sentences.

\begin{figure*}[t!]
    \centering
    \includegraphics[scale=0.38]{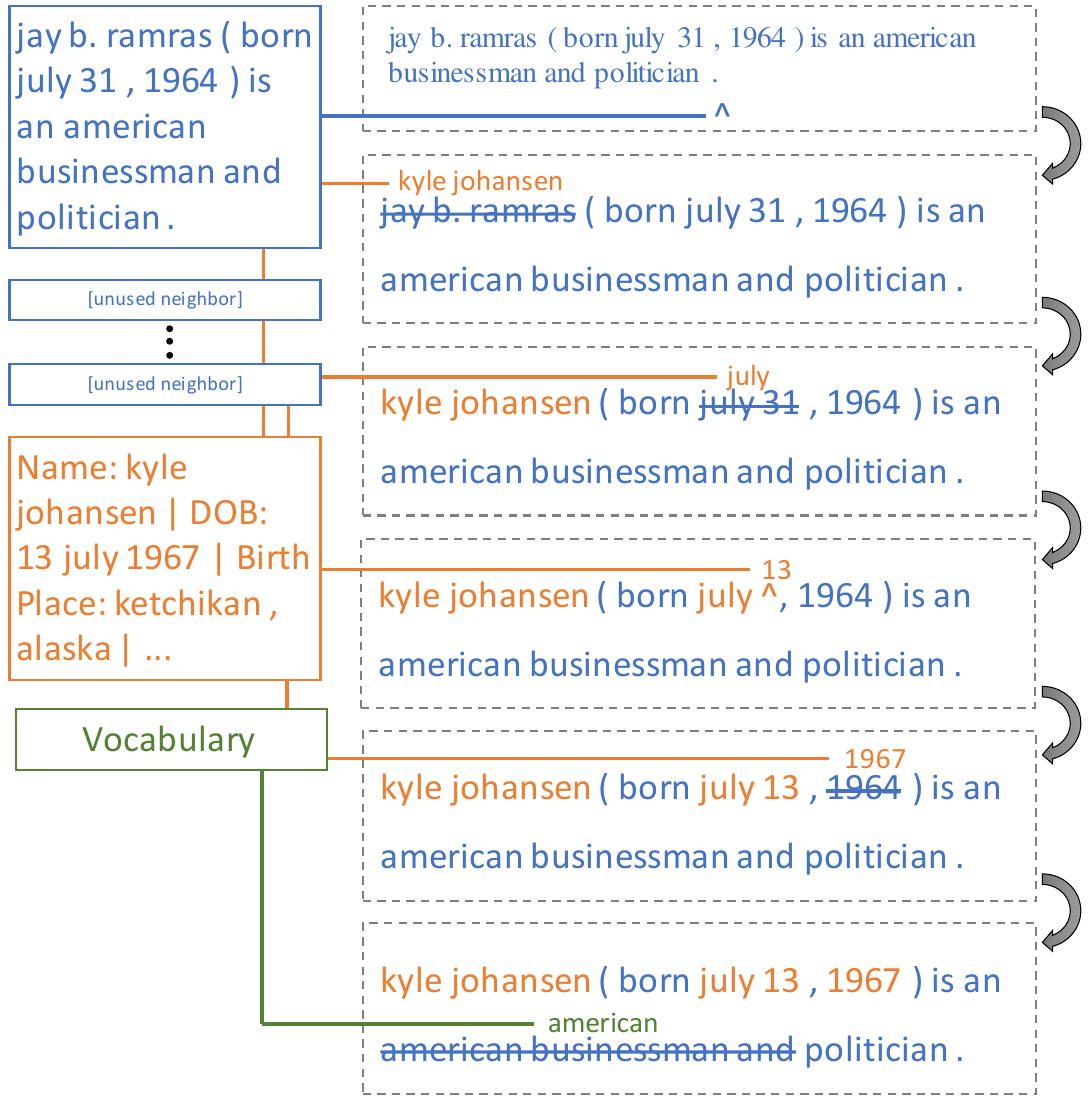}
    \hspace*{0.2cm}
    \includegraphics[scale=0.5]{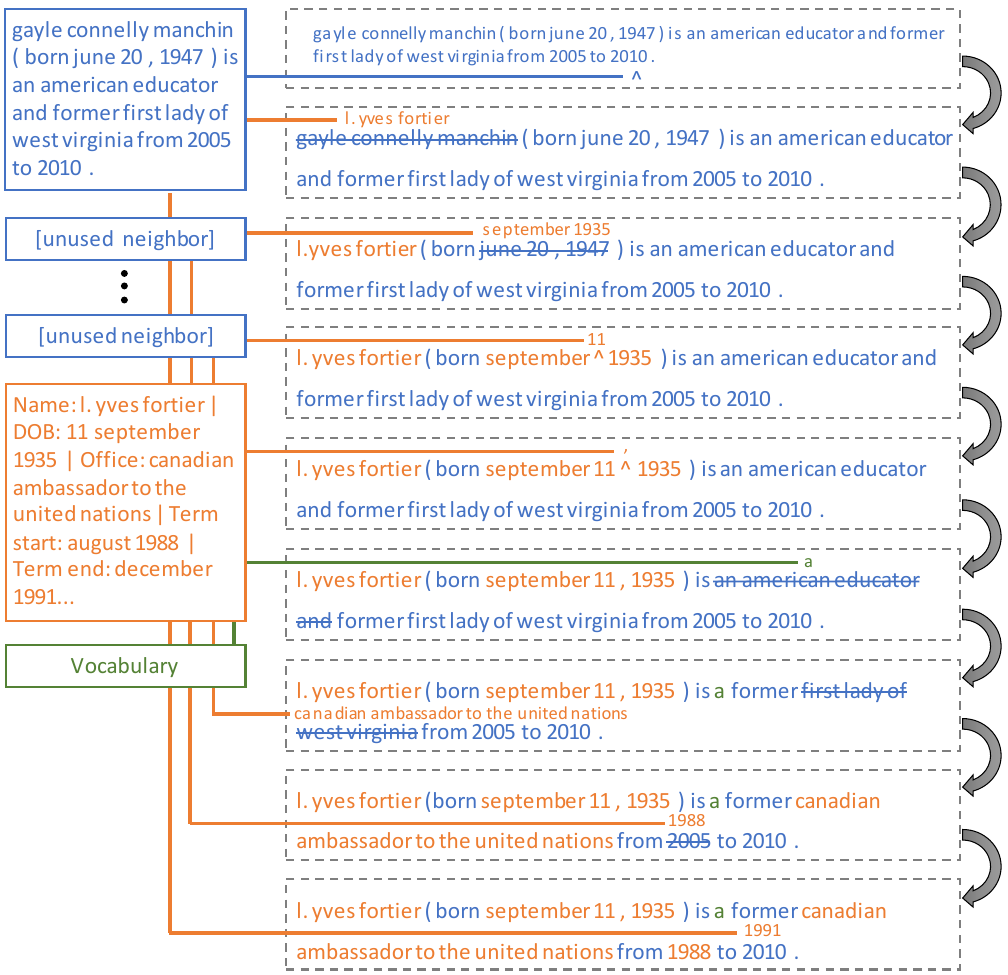}
    % \includegraphics[scale=0.53]{}
    % \hspace*{0.4cm}
    % \includegraphics[scale=0.54]{}
    \caption{A visualization of how two randomly selected model generations from the WikiBio validation set --- \textit{kyle johansen (born july 13, 1967) is an american politician.} and \textit{l. yves fortier (born september 11, 1935) is a former canadian ambassador the the united nations from 1988 to 1991.} --- are derived. In both cases only one non-table neighbor is used, and is depicted in the top-most solid box (in blue). The solid box fourth from the top (in orange) represents the linearized source table, $\nu^{(0)}$, and the fifth (in green) represents all word types in the vocabulary. Derivations should be read top-down; the current canvas is represented by a dotted box, carets indicate insertion, struck through text has been replaced, and colored lines indicate the provenance of a span. Beginning- and end-of-sequence tokens are omitted.}
    \label{fig:derivations}
\end{figure*}

% \begin{figure}[t!]
%     \centering
%     %\includegraphics[scale=0.46]{cropped-neighbor-deriv-1.pdf}
%     \includegraphics[scale=0.47]{cropped-neighbor-deriv-2.pdf}
%     \caption{Some depiction of some derivations I guess.}
%     \label{fig:derivations}
% \end{figure}

%\paragraph{\textcolor{red}{Maybe another case study if I have time and room}}

%\paragraph{Something on a non-E2E dataset}

%\section{Discussion}

\section{Related Work}
\label{sec:related}
%\sam{cite more Iyyer and Fan}
NLP systems have incorporated neighbors for decades. Early work focused on machine translation~\citep{sumita1991experiments}, syntactic disambiguation~\citep{cardie1994domain}, and tagging~\citep{daelemans1993memory,daelemans1996mbt}.

While some more recent work has made use of retrieved neighbors for problems such as sequence labeling~\citep{wiseman2019label}, auditing multi-label text classification predictions~\citep{schmaltz2020exemplar}, and reasoning over knowledge bases~\citep{das2020simple,das2021case}, the majority of recent NLP work involving neighbor-based methods has focused on conditioning neural text generation systems on retrieved neighbors. This conditioning is variously accomplished using a conventional encoder in an encoder-decoder setup~\citep{song2016two,weston2018retrieve,gu2018search,cao2018encoding,bapna2019non}, by allowing the parameters of the decoder to depend on the retrieved neighbor~\citep{peng2019text}, or by viewing the unknown neighbor as a latent variable~\citep{hashimoto2018retrieve,guu2018generating,chen2019controllable,he2020learning}. Recent work~\citep{zhang2018guiding,khandelwal2019generalization,khandelwal2020nearest} has also used retrieved neighbors at decoding time to modify the next-token distribution of the decoder. Our work differs from these approaches in that we explicitly parameterize the splicing operations that form a generation from neighbors, rather than conditioning or otherwise modifying a left-to-right token generation model using retrieved neighbors.

%Similarly motivated approaches remain popular for computer vision tasks, especially when it is impractical to learn a parametric labeling function~\citep{shakhnarovich2006nearest, schroff2015facenet}.

%There has been much recent interest in conditioning neural text generation systems on retrieved neighbors. 

% Somewhat closer to our approach is recent work~\citep{khandelwal2019generalization,khandelwal2020nearest} that conditions on retrieved prefix-contexts before generating the next token with a standard left-to-right decoder. Our work differs in that retrieval only occurs once per example (rather than per prefix), and that we parameterize splicing operations rather than token-emission distributions.
Our parameterization is motivated by trying
%The motivation for parameterizing splicing operations directly is 
to increase the interpretability and controllability of the generation process, which also motivates recent work making explicit the template or plan being followed by the generation~\citep[\textit{inter alia}]{iyyer2018adversarial,wiseman2018learning,puduppully2019data,chen2019controllable,li2020posterior}. %Interpretability has also motivated recent work in nearest neighbor-based sequence labeling~\citep{wiseman2019label}.
This more structural or syntactic flavor of controllability differs slightly from foundational work on controlling content or stylistic attributes of text~\citep{hu2017controllable,ficler2017controlling,fan2018controllable}. 

Our approach is also related to work in non-left-to-right text generation, including tree-based~\citep{welleck2019non,akoury2019syntactically}, non-autoregressive~\citep[\textit{inter alia}]{gu2018non,lee2018deterministic}, masked language model-based~\citep[\textit{inter alia}]{ghazvininejad2019mask}, and, most closely, insertion-based~\citep[\textit{inter alia}]{stern2019insertion,gu2019levenshtein,gu2019insertion} approaches. Our work differs from this last category in several important respects: first, we insert and replace (and model) full spans rather than tokens. Our policies are trained to minimize the number of insertion operations rather than to insert (centrally positioned) correct tokens in available slots, as is Insertion Transformer~\citep{stern2019insertion}, or to mimic a Levenshtein distance-based oracle, as is LevT~\citep{gu2019levenshtein}. Our policies are also fundamentally sequential, unlike these partially autoregressive alternatives, which can generate tokens in parallel. The sequential nature of our approach makes using beam search straightforward (unlike in token-parallel approaches) and, we think, leads to interpretable, serial derivations. On the other hand, decoding serially with beam search will generally be slower than the iterated parallel decoding of partially autoregressive models.
% generate in sequence (rather than in parallel
% Despite some similarities, our approach is, we believe, the first that allows for arbitrary span insertion or replacement (from neighbors) during generation. 

%both to generation models that copy spans from left to right~\citep{panthaplackel2021copy} and 
Our work also relates to recent work on sentence-level transduction tasks, like grammatical error correction (GEC), which allows for directly predicting certain span-level edits~\citep{stahlberg2020seq2edits}. These edits are different from our insertion operations, requiring token-level operations except when copying from the source sentence, and are obtained, following a long line of work in GEC~\citep{swanson2012correction,xue2014improved,felice2016automatic,bryant2017automatic}, by heuristically merging token-level alignments obtained with a Damerau-Levenshtein-style algorithm~\citep{brill2000improved}.

% Recent work in neighbor-based structured prediction for tasks other than generation has focused on deriving POS or NER taggings from neighbors in a discrete, interpretable way~\citep{wiseman2019label}, which has a similar motivation to the current work, and on the use of neighbors to audit multi-label text classification predictions~\citep{schmaltz2020exemplar}.
% Neighbors in general

% Neighbors in generation
% - exemplar
% - Khandelwal

% Interpretability.

% Non-left-to-right generation
% - non-autoregressive
% - levenshtein, etc
% - Iyyer tree based

% Not really gonna talk about control since we don't do it?

\section{Conclusion}
We have presented an approach to data-to-text generation, which directly splices together retrieved neighbors. We believe this line of work holds promise for improved interpretability and controllability of text generation systems. %controllable data-to-text generation of competitive quality.

In future work we hope to tackle more ambitious text generation tasks, which will likely require retrieving many more neighbors, perhaps dynamically, from larger data-stores, and with more sophisticated retrieval techniques, such as those currently being used in retrieval-based pretraining~\citep{lewis2020pre,guu2020realm}.

We also hope to consider more sophisticated models, which explicitly capture the history of produced canvases, and more sophisticated training approaches, which search for optimal insertions while training, rather than as a preprocessing step~\citep{daume09search,ross11a}.

% We conclude by highlighting limitations of the current work, which we hope to address in the future. While our approach offers an interpretable way of quickly generating formulaic text of competitive quality, generating less formulaic text (like translations or stories) is likely to require, at minimum, considering many more neighbors, which in turn likely requires better retrieval techniques. %, though recent successes in retrieval-based pretraining~\citep{lewis2020pre,guu2020realm} and translation~\citep{khandelwal2020nearest}, suggest this direction may be fruitful.

% We also emphasize that the current work has left rather unexplored the question of how best to retrieve neighbors for the policies we learn: we use the same number of neighbors per example, and we retrieve these neighbors before training begins in a heuristic way. In future work, we hope to dispense with these simplifications.

% We also note that, by necessity, our models are fairly small and trained from scratch; we hope that larger models will offer additional performance benefits. Our models also do not explicitly represent the history of actions taken and canvases formed; a more sophisticated (and expensive) approach would capture this with a recurrent model.

% Finally, we note that our oracle derivations are calculated offline. A more sophisticated training scheme would involve searching under the current policy while training, in order to calculate (approximately) optimal next-action targets.

% Entries for the entire Anthology, followed by custom entries
%\bibliography{anthology,custom}
\bibliography{custom}
\bibliographystyle{acl_natbib}

\appendix

\newpage
\newpage
\ \ \ 
\\\\ 
\\\\ \\\\\\
\\\\\\
\\\\\\
\\\\\
\\\\\
\\\\\
\\\\
\\\\\
\\\\\
\\\\\
\\\\
\\\\\
\\\\\
\\\\\
\\\\
\\\\\
\\\\
\\\\\
\\\\
\\\\\
\\\\

\section{Proof of Claim \ref{claim1}}
\label{sec:proof}

Given sequences $\nu^{(1)}, \ldots, \nu^{(N)}$ and a target sequence $y$,
let $C$ be the minimum integer such that there exist sequences $y^{(0)}, \ldots, y^{(C)}$ with $y^{(0)} \niceeq \eps$ being the empty sequence, $y^{(C)} \niceeq y$, and $y^{(c)} \niceeq \ginsert(y^{(c-1)},i,j,n,k,l)$
% \begin{align*}
% \small
%  y^{(c)} &= \ginsert(y^{(c-1)},i,j,n,k,l) \\
%       &= y^{(c-1)}_{1:i} \cdot \nu^{(n)}_{k:l} \cdot y^{(c-1)}_{j:M}
% \end{align*}
for $c \niceeq 1, \ldots, C$, where $M \niceeq |y^{(c-1)}|$, $0 \, {\leq} \, i \, {<} \, j \, {\leq} \, M{+}1$, and $1 \, {\leq} \, k \, {\leq} \, l \, {\leq} \, |\nu^{(n)}|$. Let $\text{cost}_{\text{ins}}(y)$ be this minimum $C$, equivalent to the length of the shortest derivation of $y$ with actions in $\actset$, and let $\text{cost}_{\text{CFG}}(y)$ be the cost of the minimum cost derivation of $y$ with the WCFG in Section~\ref{sec:WCFG}. We want to show that $\text{cost}_{\text{ins}}(y) \niceeq \text{cost}_{\text{CFG}}(y)$, which we accomplish by showing that $\text{cost}_{\text{ins}}(y)\leq \text{cost}_{\text{CFG}}(y)$ and that $\text{cost}_{\text{ins}}(y)\geq \text{cost}_{\text{CFG}}(y)$. %Now we restate Claim \ref{claim1} as follows.

\begin{proposition} \label{t2}
	$\text{cost}_{\text{ins}}(y)\leq \text{cost}_{\text{CFG}}(y).$
\end{proposition}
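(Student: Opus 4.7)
The plan is to fix a minimum-cost WCFG parse tree $T^*$ of $y$ with total cost $C \niceeq \text{cost}_{\text{CFG}}(y)$, and by structural induction on its subtrees exhibit a sequence of exactly $C$ $\ginsert$ operations that derives $y$ from the empty canvas. Since only rules with LHS $S$ carry nonzero cost, it suffices to charge each cost-$1$ rule application in $T^*$ to exactly one $\ginsert$.

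The inductive claim I would try to establish is: \emph{for any subtree of $T^*$ rooted at an $S$-nonterminal with yield $w$ and WCFG cost $c$, and for any canvas $\hat{y}_{1:M}$ together with indices $0 \leq i < j \leq M{+}1$, there exists a sequence of exactly $c$ $\ginsert$ operations transforming $\hat{y}_{1:M}$ into $\hat{y}_{1:i} \cdot w \cdot \hat{y}_{j:M}$, whose first operation has the form $\ginsert(\hat{y}_{1:M}, i, j, \cdot, \cdot, \cdot)$.} Applied to the root of $T^*$ with empty canvas and $(i,j) \niceeq (0,1)$, this immediately yields the proposition.

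The induction splits on the top rule at the root of the subtree. \emph{(i)} For $S \rightarrow Y^{(n)}_{k:l}$ (cost $1$), a single $\ginsert(\hat{y}_{1:M}, i, j, n, k, l)$ suffices. \emph{(ii)} For $S \rightarrow S_1\, S_2$ (cost $c_1 + c_2$), first apply the inductive hypothesis to $S_1$ with indices $(i,j)$, then to $S_2$ with indices $(i+|w_1|,\, i+|w_1|+1)$; the latter is a pure insertion ($j \niceeq i+1$) that appends $w_2$ immediately after $w_1$. \emph{(iii)} For $S \rightarrow Y^{(n)}_{k_1:l_1}\, C^{(n)}_{k_2}$ (cost $1 + \sum_{m=1}^{q-1} c_m$), fully expand the right-linear chain of $C^{(n)}$ and $R^{(n)}$ nonterminals to obtain the child sequence $Y^{(n)}_{k_1:l_1}, S^{(1)}, Y^{(n)}_{k_2:l_2}, S^{(2)}, \ldots, S^{(q-1)}, Y^{(n)}_{k_q:l_q}$ with $l_m < k_{m+1}$ for each $m$. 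The cost-$1$ operation I charge to this rule is $\ginsert(\hat{y}_{1:M}, i, j, n, k_1, l_q)$, which splices the \textit{entire} contiguous neighbor span $\nu^{(n)}_{k_1:l_q}$ into the canvas in one shot. The correct fragments $\nu^{(n)}_{k_m:l_m}$ now sit in their target positions; between them lie unwanted ``extra'' substrings $\nu^{(n)}_{l_m+1:k_{m+1}-1}$, possibly of length zero when $k_{m+1} \niceeq l_m + 1$. For each $m$ in turn, invoking the inductive hypothesis on $S^{(m)}$ with indices bracketing that extra replaces it by $w^{(m)}$ using $c_m$ operations; its leading $\ginsert$ is a genuine replacement when the extra is nonempty and a pure insertion otherwise. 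Summing yields $1 + \sum_m c_m \niceeq c$.

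The main obstacle I anticipate is the index bookkeeping in case \emph{(iii)}: after processing $S^{(1)}, \ldots, S^{(m-1)}$, the absolute canvas positions bracketing the $m$-th extra depend on all of the yield lengths $|w^{(m')}|$ for $m' < m$ together with the widths of the preceding extras, and one must verify that $\ginsert$'s precondition $i < j$ is maintained throughout (with equality ruled out only because a zero-length extra still admits a one-position insertion gap). The conceptual crux, however, is the simple observation that a single $\ginsert$ smoothly interpolates between pure insertion ($j \niceeq i+1$) and replacement ($j > i+1$), which lets zero-length and positive-length extras be treated uniformly and makes the assignment of one cost-$1$ $\ginsert$ per cost-$1$ WCFG rule exact.
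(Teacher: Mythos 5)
Your proposal is correct and takes essentially the same approach as the paper's proof: one $\ginsert$ of the full contiguous neighbor span $\nu^{(n)}_{k_1:l_q}$ charged to each cost-$1$ rule, followed by recursive replacement of the intervening ``extra'' material by the yields of the embedded $S$ subtrees. The only difference is that you make explicit the strengthened inductive hypothesis (deriving a yield in place into an arbitrary canvas gap $(i,j)$, with the first operation spanning the whole gap) that the paper's recursion uses implicitly.
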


\vspace*{-0.4cm}
\begin{proof}
	Consider the minimum cost derivation tree of $y$ under the WCFG in Section~\ref{sec:WCFG}, and let $C$ be the minimum cost. We show inductively that there exists a sequence of $\leq C$ insert operations that yields $y$ from $\eps$, by considering two cases.
	%This can be showed by traversing the nodes of the derivation tree in an appropriate order (as in the note).
	
	\paragraph{Case 1:} $y$ is derived using only the first two grammar rules, and so is a concatenation of sequences $\nu^{(n)}_{k:l}$ derived from the first grammar rule. If there are $C$ such sequences, the WCFG derivation costs $C$. Also, constructing this sequence using insertions requires at most $C$ insertions. %, also costing $C$.
	
	\paragraph{Case 2:} $y$ is derived using at least one application of the third grammar rule and can be written as $y \niceeq s^{(1)} \cdot y^{(1)} \cdot s^{(2)} \cdot y^{(2)} \cdot \ldots \cdot y^{(Q)} \cdot s^{(Q+1)}$, where the $s^{(q)}$ sequences are all from the same $\nu^{(n)}$ (using the last three grammar rules), and the $y^{(q)}$ sequences are derived from an $S$ nonterminal. We can derive $y$ using $\ginsert$ operations by inserting a substring of $\nu^{(n)}$ containing all the $s^{(q)}$ (which costs $1$ under the grammar), then inserting the remaining $y^{(q)}$ recursively, which, by induction, costs at most $\text{cost}_{\text{CFG}}(y^{(q)})$. The total number of insertions is then at most $\text{cost}_{\text{CFG}}(y)$.
\end{proof}

\begin{proposition} \label{t1}
	$\text{cost}_{\text{ins}}(y)\geq \text{cost}_{\text{CFG}}(y).$
\end{proposition}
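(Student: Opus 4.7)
The plan is to show that any $\ginsert$-derivation of $y$ from the empty canvas with $C$ operations yields a WCFG parse of $y$ of cost at most $C$; applying this to an optimal $\ginsert$-derivation then gives the proposition. Let $o_1, \ldots, o_C$ with $o_c \niceeq (i_c, j_c, n_c, k_c, l_c)$ be such a derivation, applied in order. For each position $p$ of the final $y$, let the \emph{ancestry} $a(p) \nicein \{1, \ldots, C\}$ be the index of the operation that wrote the token at $p$ and whose write was never subsequently overwritten. For each $c$, let $S_c$ be the smallest interval of $y$ containing all positions of ancestry $c$, and let $\mathcal{F} \niceeq \{S_c : c \text{ has at least one surviving position}\}$, so $|\mathcal{F}| \le C$.

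The first step is to prove by induction on the number of applied operations that $\mathcal{F}$ is a \emph{laminar family}: any two members are either disjoint or one contains the other. The inductive step uses the fact that $\ginsert$ deletes a contiguous range of the current canvas and substitutes a contiguous block of new positions, so any pre-existing ancestry either lies entirely on one side of the deleted range (its new span is disjoint from the new operation's span) or has positions on both sides (its new span strictly contains the new operation's span). Containment and disjointness among pre-existing ancestries are preserved because positions only shift monotonically.

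Next, I would read a WCFG parse off the laminar tree. The surviving positions of ancestry $c$ correspond to an increasing sequence of neighbor indices $k_c \le a_1 \le b_1 < a_2 \le b_2 < \cdots < a_m \le b_m \le l_c$ in $\nu^{(n_c)}$, since $o_c$ initially lays down $\nu^{(n_c)}_{k_c:l_c}$ as a monotone run and later overwrites can only remove contiguous subranges, preserving left-to-right order of the survivors. Spend exactly one cost-$1$ rule per operation---either $S \to Y^{(n_c)}_{a_1:b_1}$ when $m \niceeq 1$, or $S \to Y^{(n_c)}_{a_1:b_1}\, C^{(n_c)}_{a_2}$ otherwise---and thread the remaining sub-spans using only the cost-$0$ rules $C^{(n_c)}_{a_j} \to S\, R^{(n_c)}_{a_j}$ and $R^{(n_c)}_{a_j} \to Y^{(n_c)}_{a_j:b_j}$ or $R^{(n_c)}_{a_j} \to Y^{(n_c)}_{a_j:b_j}\, C^{(n_c)}_{a_{j+1}}$; the intercalated $S$ non-terminals recursively derive the gaps between surviving sub-spans, which by the laminar property are exactly generated by the operations that are direct children of $c$. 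If the laminar family has multiple roots, or if a single gap contains material from several direct children, stitch them using the cost-$0$ rule $S \to S\, S$. Summing costs gives a WCFG parse of total cost at most $C$, whence $\text{cost}_{\text{CFG}}(y) \le \text{cost}_{\text{ins}}(y)$.

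The main obstacle is the inductive proof of the laminar property, which requires careful bookkeeping of how positions shift after each $\ginsert$. A secondary check is that the monotone indices $a_1 \le b_1 < a_2 \le b_2 < \cdots$ automatically satisfy the grammar's side conditions ($l < s$ in $S \to Y^{(n)}_{k:l}\, C^{(n)}_s$ and $s \le t < u$ in the $R$-rule), which follows because the $a_j, b_j$ come from disjoint ordered subranges of $\nu^{(n_c)}_{k_c:l_c}$.
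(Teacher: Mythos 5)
Your proposal is correct and follows essentially the same route as the paper's proof: your ancestry labels and laminar family of minimal enclosing intervals are exactly the paper's integer-labelled sequences $\tilde y^{(c)}$ and its ``non-interleaving'' observation, and reading a parse off the laminar tree at a cost of one per surviving operation mirrors the paper's inductive Case 1/Case 2 derivation paying one per distinct integer. If anything, you are slightly more explicit than the paper in verifying that surviving tokens of one operation decompose into increasing maximal runs of $\nu^{(n_c)}$ satisfying the grammar's side conditions $l<s$ and $s\le t<u$.
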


\vspace*{-0.2cm}
\begin{proof}
% 	Let $C \niceeq \text{cost}_{\text{insert}}(y)$ and $y^{(0)} \niceeq \eps, y^{(1)}, \ldots, y^{(C)} \niceeq y$ be the sequences that yield $y$. 
%Let $C \niceeq \text{cost}_{\text{ins}}(y)$ and $y^{(0)}, \ldots, y^{(C)}$ be defined as above.
	Let $\tilde y^{(0)} \niceeq \eps, \tilde y^{(1)}, \ldots, \tilde y^{(C)}$ be sequences of integers defined as follows:
	
	\vspace*{-0.5cm}
	{\small 
	\begin{align*}
		\tilde y^{(c)} = \tilde y^{(c-1)}_{1:i} , \underbrace{c , c \ldots c , c}_{|\nu^{(n)}_{k:l}|}, \tilde y^{(c-1)}_{j:M}.
	\end{align*}
	}
	
	\vspace*{-0.4cm}
	\noindent That is, instead of inserting sequence $\nu^{(n)}_{k:l}$, we insert a sequence of $|\nu^{(n)}_{k:l}|$ integers $c$. We call a sequence $z$ of integers non-interleaving if there is no $i<j<k<l$ such that $z_i = z_k$ and $z_j = z_l$.
	\begin{observation} \label{l1}
		The sequences $\tilde y^{(0)}, \ldots, \tilde y^{(C)}$ are non-interleaving.
	\end{observation}
	
	\vspace*{-0.4cm}
	\begin{proof}
		By induction: if $\tilde y^{(c)}$ is non-interleaving, then so is $\tilde y^{(c+1)}$ by its construction from $\tilde y^{(c)}$.
	\end{proof}
	
	Now let $y' \niceeq \tilde y^{(C)}$ to simplify notation. Let $\text{distinct}(y')\leq C$ be the number of distinct integers in the sequence $y'$. We show how to derive $y$ from the grammar with derivation cost $\text{distinct}(y')$, which proves the proposition. We call an integer $c'$ contiguous in $y'$ if $y'$ can be written as $y' = y'', c', c', \ldots, c', c', y'''$ such that sequences $y''$ and $y'''$ do not contain $c'$. We derive $y$ from the grammar inductively, by considering two cases.
	
	\paragraph{Case 1:} all integers in $y'$ are contiguous, so $y'$ consists of contiguous blocks of repeated integers. Let $b$ be the number of blocks. We invoke the second grammar rule $b\,{-}\,1$ times to get $S$ repeated $b$ times and then invoke the first grammar rule for each $S$ to derive the contiguous sequence $\nu^{(n)}_{k:l}$ corresponding to the block. This costs $\text{distinct}(y') \niceeq b$ in total as required.
	
	\paragraph{Case 2:} there is an integer in $y'$ that is not contiguous. Let $c'$ be the left-most non-contiguous integer in $y'$. $c'$ splits $y'$ into several shorter sequences $y'^{(1)}, \ldots, y'^{(Q)}$, where each sequence $y'^{(q)}$ does not contain any copy of integer $c'$. Since $y'$ is non-interleaving (by Observation \ref{l1}), $y'^{(q)}$ and $y'^{(q')}$ do not share any integers for $q \neq q'$. Therefore,
	$
		\text{distinct}(y') \niceeq 1 + \text{distinct}(y'^{(1)})+ \ldots + \text{distinct}(y'^{(Q)})
	$.
	Furthermore, each sequence $y'^{(q)}$ is non-interleaving. Therefore, we can derive the subsequence of $\tilde y^{(C)}$ corresponding to $y'^{(q)}$ from the non-terminal $S$ and it costs $\text{distinct}(y'^{(q)})$ by induction. To finish the proof we need to show that we can combine the resulting sequences into the sequence corresponding to $y'$ by paying an additional cost of only $1$. We can do that by using the last three rules of the grammar where the rule of cost $1$ is applied only once. In particular, we pay $1$ to derive the sequence corresponding to the first block of integers $c'$ in $y'$. The sequence is derived from $Y^{(n)}_{k:l}$, which comes from the rule $S \to Y^{(n)}_{k:l}\,  C^{(n)}_s$ and the application of this rule costs $1$. The rest of the blocks of $c'$ are derived from the last three rules and they cost $0$. %This finishes the proof.
\end{proof}

\section{Manual Analysis of WikiBio Errors}
In Table~\ref{tab:errors} we analyze the faithfulness errors of the \gimodel{} policies on 50 random test examples from the WikiBio dataset, comparing them to the generations of the S2S+Copy model. We divide the errors into hallucination errors, where the model invents facts neither supported nor contradicted by the table, explicit contradiction errors, where the model explicitly contradicts information in the table, and implicit contradiction errors, where the model contradicts information that is only implicit in the table. 

We find that the \gimodel{} model hallucinates at approximately the same rate as does S2S+Copy. However, it also generates more explicit contradictions. These tend to occur when a span containing contradictory information is copied to the canvas, but is not subsequently edited. We suspect that incorporating additional losses, such as a round-trip reconstruction loss~\citep{tu2017neural} will be helpful here. It is notable that the \gimodel{} generations struggle with implicit contradiction more than S2S+Copy. Some examples of implicit contradiction we observed include when a person's gender or nationality are strongly suggested by their name, or their area of study by their thesis title, despite this information not being explicit in the table. We suspect that bigger models, especially if they store state in addition to the canvas (which our \gimodel{} models do not), will better address these cases.

\label{sec:wikierrors}
\begin{table}[t!]
 \small
    \centering
    \begin{tabular}{lcc}
    \toprule
    & \gimodel{} & S2S+Copy \\
    \midrule
    Hallucination & 6 & 5 \\
    Explicit contradiction & 6 & 1 \\
    Implicit contradiction & 3 & 0 \\
    \bottomrule
   \end{tabular}
    \caption{Manual categorization of faithfulness errors made by \gimodel{} and S2S+Copy models on 50 random examples from the WikiBio test set.}
    \label{tab:errors}
\end{table}

\section{Controllability Case Study}
\label{sec:casestudy}
 We briefly consider a case-study that exemplifies controlling generation by controlling the neighbors used at test time. We consider in particular a situation where control is much more easily accomplished under \gimodel{} policies than token-level policies.

%Before doing so, we note that among recent work on controlling text generation, some authors (e.g., \citealp{hu2017controllable,ficler2017controlling,fan2018controllable}) focus on controlling content or stylistic attributes, while others (e.g., \citealp{iyyer2018adversarial,wiseman2018learning,chen2019controllable}) focus on controlling syntactic or structural aspects of the generated text. Generating with our splicing-based proposed approach presumably lends itself more to this latter type of control, though we might also imagine encouraging or discouraging particular phrases through the retrieved neighbors. 

%\paragraph{Requiring multi-sentence generations} 
Some examples in the E2E dataset consist of only a single sentence (e.g., ``The Golden Curry is a non family friendly Indian restaurant with an average rating located in the riverside area near Cafe Rouge.''), while others split the description into multiple sentences. We consider requiring the generated text to consist of at least 3 sentences, which is interesting and challenging for two reasons. First, only about 8\% of the training examples have $\geq 3$ sentences. Second, while it is sometimes possible to force the generations of token-level models to obey structural constraints by constraining beam search (e.g., by disallowing certain tokens depending on the context), a constrained beam search does not make it easy to guarantee the \textit{presence} of certain structural features. Specifically, while it is easy to constrain beam search so that hypotheses with too many sentences are kept off the beam, it is unclear how to ensure beam search finds only (or even any) hypotheses with enough sentences.

We accordingly restrict both the \gimodel{} and \ltwort{} models to use only neighbors with $ \geq 3$ sentences (as determined by a regular expression) when generating on the E2E development set. We find that 87.2\% of the resulting \gimodel{} generations have $\geq 3$ sentences, while only 73.5\% of the \ltwort{} generations do. Furthermore, the quality of the resulting text remains high, with a ROUGE score of 67.6 for the \gimodel{} generations and 71.7 for \ltwort{}. (Note this comparison unfairly favors \ltwort{}, which generates many fewer of the rare $\geq 3$ sentence generations). 

When the \gimodel{} model fails to respect the constraint it is because it has inserted text that replaces the end of a sentence (or two). We can reach 100\% constraint satisfaction by simply constraining the \gimodel{} model's beam search to never replace a full sentence in the canvas. As noted above, we \textit{cannot} easily constrain the \ltwort{} beam search to reach 100\% constraint satisfaction.

\section{Additional Model and Training Details}
\label{sec:hypers}
% We found it helpful to discourage derivations that insert information that \textit{could} be obtained from source information (e.g., the table) but is instead obtained from a neighbor. Accordingly, we mask out tokens in each neighbor that are themselves likely to have come from that neighbor's source information. For each neighbor $\nu^{(n)}$, we mask each sequence of tokens in $\nu^{(n)}$ that occur in $x^{(n)}$, as long as it is not a stop-word. In addition, we mask any token in $\nu^{(n)}$ that has sufficiently high pointwise mutual information with any token in $x^{(n)}$ and occurs sufficiently many times. We release the pairs of tokens we mask, the masked neighbor sequences we use, and the code for computing them in the repository linked to in the introduction.

%\subsection{Model and Hyperparameter Details}

Our models are BART~\citep{lewis2020bart}-style encoder-decoder transformers. They consume embeddings of the linearized source tokens $x$ and the current canvas $\hat{y}_{1:M}$ (plus positional embeddings). To allow the model to capture how recently tokens were added to the canvas, we add to each canvas token embedding an embedding of a feature indicating how many time-steps have elapsed since it was added. We also add to each $x$ token embedding the embedding of an indicator feature indicating whether it has been copied to $\hat{y}_{1:M}$. We obtain neighbor embeddings by putting neighbor token embeddings plus positional embeddings plus the embedding of an indicator feature indicating that these are neighbor tokens through the same encoder that consumes $x$.

%\subsection{Hyperparameter Details}

All transformer encoder-decoders have 6 layers, with model dimension 420, feed-forward dimension 650, 7 attention heads, and dropout rate 0.1. These hyperparameters were chosen (and then fixed) so as to allow the largest model that could be trained in a reasonable amount of time on our GTX 1080 Ti and RTX 2080 Ti GPUs.

We trained with Adam~\citep{kingma2015adam}. We linearly warm-up the learning rate during the first 4,000 training steps, and then use square-root learning rate decay as in \citet{devlin2019bert} after warm-up. To stabilize training we accumulate gradients over 400 target sequences.

We show the training and prediction hyperparameter bounds we considered in Table~\ref{tab:adam}. We selected combinations at random for 50 1-epoch trials for each model, evaluating on validation negative log-likelihood. Our final hyperparameter values are in Table~\ref{tab:finalhypers}.

\begin{table}[t!]
  \small
    \centering
    \begin{tabular}{lc}
    \toprule
        learning rate & \{1e-4, 3e-4, 5e-4, 1e-3\}  \\
        %\midrule
        $\beta_1$ &  \{0.85, 0.9, 0.95 \} \\
        $\beta_2$ & \{0.9, 0.99, 0.999 \} \\
        $\eps$ &    \{1e-6, 1e-7, 1e-8 \} \\
        weight decay & \{0, 1e-3, 1e-2\} \\
        beam width & \{1, 5, 10, 20\} \\ 
    \bottomrule
    \end{tabular}
    \caption{Hyperparameter bounds.}
    \label{tab:adam}
\end{table}

\begin{table}[t!]
 \small
    \centering
    \begin{tabular}{@{}lcccccc@{}}
    \toprule
        & LR & $\beta_1, \beta_2$ & $\eps$ & WD & BW \\ 
    \midrule
    E2E-\gimodel{} & 1e-3 & 0.9, 0.999 & 1e-7 & 1e-3 & 5 \\
    E2E-\ltwort{} & 5e-4 & 0.9, 0.999 & 1e-7 & 1e-3 & 5 \\
    E2E-S2S+copy & 5e-4 & 0.9, 0.999 & 1e-7 & 1e-3 & 5 \\
    WB-\gimodel{} & 5e-4 & 0.9, 0.999 & 1e-7 & 1e-3 & 10 \\
    WB-\ltwort{} & 5e-4 & 0.9, 0.999 & 1e-7 & 1e-3 & 10 \\
    WB-S2S+copy & 3e-4 & 0.9, 0.999 & 1e-7 & 1e-3 & 20 \\    
    \bottomrule
    \end{tabular}
    \caption{Final hyperparameters used. ``LR'', ``WD'', and ``BW'' are learning rate, weight decay, and beam width, respectively.}
    \label{tab:finalhypers}
\end{table}

% We tuned the beam-size $B$, the maximum number of insertions allowed, and the number of retrieved neighbors used at search time on the validation sets. We list the chosen hyperparameters for each policy variant and each dataset in Table~\ref{tab:searchhypers}.

% \begin{table}[t!]
%     \centering
%     \small
%     \begin{tabular}{lccc}
%     \toprule
%     & Beam & Max \# Inserts & \# Ne \\
%     \midrule
%  E2E-\gimodel & 20 & 14 & 5 \\
%  E2E-\ltwors & 40 & 21 & 100 \\
%  E2E-\ltwort & 5 & 35 & 100 \\
%  E2E-S2S     &  5 & 35 & 0 \\
%  \midrule
%  WB-\gimodel & 10 & 21 & 30 \\
%  WB-\ltwors  & 40 & 28 & 5  \\
%  WB-\ltwort  & 10 & 47 & 100 \\
%  WB-S2S      & 20 & 48 & 0 \\
%  \midrule
%  GW-\gimodel & 5 & 7 & 110 \\
%  GW-\ltwors & 5 & 9  & 110 \\
%  GW-\ltwort & 10  &  12 & 200 \\
%  GW-S2S     & 10 &   13 & 0 \\
%     \bottomrule
%     \end{tabular}
%     \caption{Search hyperparameters}
%     \label{tab:searchhypers2}
% \end{table}

\section{Human Evaluation Details}
\label{sec:heval}
We selected 45 random examples from each of the E2E and WikioBio test-sets for use as crowd-worker prompts. Each example was rated by 3 crowd-workers, and each crowd-worker rated each of the 3 systems. We excluded the 11 responses that did not provide all 9 ratings (3 ratings for each of 3 examples). We show a screen-shot of the questions asked of Mechanical Turk crowd-workers, given a table and generated description, in Figure~\ref{fig:turkers}. We show results of significance tests in Table~\ref{tab:anova}
 and~\ref{tab:tukey}.
 
\begin{table}[t!]
 \small
    \centering
    \begin{tabular}{lc}
    \toprule
    & Pr(>F) \\
    \midrule
    faithfulness & 0.9648 \\
    naturalness & 0.4626 \\
    informativeness & 0.4412 \\
    \midrule
    faithfulness & 0.0182 \\
    naturalness & 0.4689 \\
    informativeness & 0.4360 \\
    \bottomrule
   \end{tabular}
    \caption{System-type $p$-values under ANOVA for E2E (top) and Wikibio (bottom). }
    \label{tab:anova}
\end{table}

\begin{table}[t!]
 \small
    \centering
    \begin{tabular}{@{}lcc@{}}
    \toprule
    & $p$ & CI \\
    \midrule
    \ltwort{}-S2S-faithfulness & 0.9 & (-0.3295, 0.3295)\\
     \ltwort{}-\gimodel{}-faithfulness & 0.9 & (-0.3056, 0.3533) \\
     \gimodel{}-S2S-faithfulness & 0.9 & (-0.3056, 0.3533) \\
    \ltwort{}-S2S-naturalness & 0.6882 & (-0.2325, 0.4706)\\
     \ltwort{}-\gimodel{}-naturalness & 0.6882 & (-0.2325, 0.4706) \\
     \gimodel{}-S2S-naturalness & 0.9 & (-0.3516, 0.3516) \\     
    \ltwort{}-S2S-informativeness & 0.6474 & (-0.4712, 0.2172)\\
     \ltwort{}-\gimodel{}-informativeness & 0.9 & (-0.3918, 0.2965) \\
     \gimodel{}-S2S-informativeness & 0.8337 & (-0.2648, 0.4235) \\          
    \midrule
    \ltwort{}-S2S-faithfulness & 0.9 & (-0.3243, 0.3544)\\
     \ltwort{}-\gimodel{}-faithfulness & 0.2869 & (-0.5574, 0.1213) \\
     \gimodel{}-S2S-faithfulness & 0.2403 & (-0.5724, 0.1062) \\
    \ltwort{}-S2S-naturalness & 0.8601 & (-0.4315, 0.2812)\\
     \ltwort{}-\gimodel{}-naturalness & 0.6329 & (-0.4917, 0.221) \\
     \gimodel{}-S2S-naturalness & 0.9 & (-0.4165, 0.2962) \\
    \ltwort{}-S2S-informativeness & 0.9 & (-0.297, 0.3572)\\
     \ltwort{}-\gimodel{}-informativeness & 0.7419 & (-0.4249, 0.2294) \\
     \gimodel{}-S2S-informativeness & 0.6181 & (-0.4549, 0.1993) \\     
    \bottomrule
   \end{tabular}
    \caption{$p$-value and 95\% confidence intervals under Tukey HSD test (for pairwise difference of means) on E2E (top) and Wikibio (bottom).}
    \label{tab:tukey}
\end{table}

\begin{figure*}[t]
    \centering
    \includegraphics[scale=0.35]{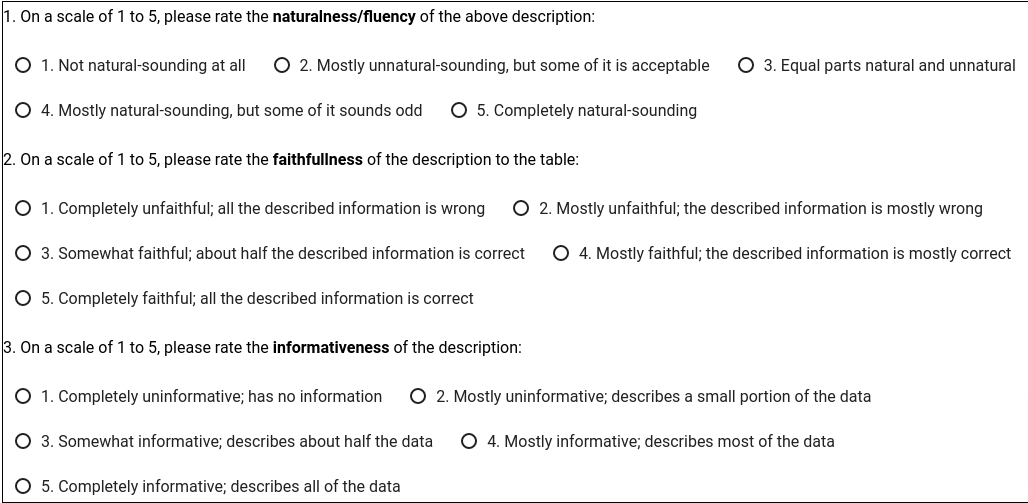}
    \caption{Questions asked of Mechanical Turk crowd-workers, given a table and generated description.}
    \label{fig:turkers}
\end{figure*}

\section{Sample Generations}
\label{sec:samplegens}
We provide 5 random generations from each of \gimodel{}, \ltwort{}, and S2S+copy on the E2E and WikiBio test-sets in Tables~\ref{tab:samples1} and~\ref{tab:samples2}.

\begin{table*}[t!]
 \small
    \centering
    \begin{tabular}{@{}l@{}l@{}}
    \toprule
     & The Blue Spice coffee shop is based near Crowne Plaza Hotel and has a high customer rating of 5 out of 5. \\
 & The Cocum is a pub near Burger King. It has a high customer rating. \\
 & Cocum is a pub near The Sorrento. \\
 & The Giraffe is a restaurant near Rainbow Vegetarian Café in the city centre which serves Fast food. It is not family-friendly. \\
 & The Cricketers is a family friendly coffee shop near Ranch. It has a low customer rating. \\
 \midrule
 & Blue Spice is a coffee shop near Crowne Plaza Hotel. It has a customer rating of 5 out of 5. \\
 & Cocum pub has a high customer rating and is located near Burger King. \\
 & Cocum is a pub near The Sorrento. \\
 & Giraffe is a fast food restaurant near Rainbow Vegetarian Café in the city centre. It is not family-friendly. \\
 & The Cricketers is a family friendly coffee shop near Ranch with a low customer rating. \\
 \midrule
 & Blue Spice is a coffee shop near Crowne Plaza Hotel. It has a customer rating of 5 out of 5. \\
 & Cocum is a highly rated pub near Burger King. \\
 & Cocum is a pub near The Sorrento. \\
 & Giraffe is a fast food restaurant located in the city centre near Rainbow Vegetarian Café. It is not family-friendly. \\
 & The Cricketers is a family friendly coffee shop near Ranch with a low customer rating. \\
\bottomrule
    \end{tabular}
     \caption{Random E2E samples from \gimodel{} (top), \ltwort{} (middle), and S2S+copy (bottom)}.
    \label{tab:samples1}
\end{table*}

\begin{table*}[t!]
 \small
    \centering
    \begin{tabular}{@{}l@{}l@{}}
    \toprule
& 1. morarji desai ( 29 february 1896 -- 10 april 1995 ) was a premiership - winning indian civil servant \\
& \quad who served as prime minister of india between 1977 and 1979 . \\
& 2. charles casali ( 27 april 1923 -- 8 january 2014 ) was a swiss football midfielder who played for switzerland \\
& \quad in the 1954 fifa world cup . \\
& 3. jorgen thalbitzer ( 22 may 1920 -- 29 march 1943 ) was a flying officer of the royal air force during world war ii . \\
& 4. jacob joseph `` jack '' lew ( born august 29 , 1955 ) is an american diplomat . \\
& 5. dietrich - siegwart konrad friedrich fürchtegott von bonin ( 2 february 1917 -- 15 april 1970 ) was a highly decorated \\
& \quad rittmeister der reserves in the wehrmacht during world war ii . \\
\midrule
& 1. morarji desai ( 29 february 1896 -- 10 april 1995 ) was prime minister of india between 1977 and 1979 . \\
& 2. charles casali ( 27 april 1923 -- 8 january 2014 ) was a swiss football midfielder who played for switzerland \\
& \quad in the 1956 fifa world cup . \\
& 3. jorgen thalbitzer ( 22 may 1920 -- 29 march 1943 ) was a danish flying ace during world war ii . \\
& 4. jacob joseph `` jack '' lew ( born august 29 , 1955 ) is the 76th united states secretary of the treasury . \\
& 5. dietrich - siegwart konrad friedrich fürchtegott von bonin ( 2 february 1917 -- 15 april 1970 ) was a highly decorated \\
& \quad rittmeister der reserves in the wehrmacht during world war ii . \\
\midrule
& 1. morarji desai ( 29 february 1896 -- 10 april 1995 ) was the prime minister of india from 24 march 1977 to 15 july 1979 . \\
& 2. charles casali ( 27 april 1923 -- 8 january 2014 ) was a swiss football midfielder who played for switzerland \\
& \quad in the 1950 fifa world cup . \\
& 3. jorgen thalbitzer ( 22 may 1920 -- 29 march 1943 ) was a danish flying ace of world war ii .\\
& 4. jacob joseph `` jack '' lew ( born august 29 , 1955 ) is the 76th united states secretary of state for management and budget . \\
& 5. dietrich - siegwart konrad friedrich fürchtegott von bonin ( 2 february 1917 -- 15 april 1970 ) was a highly decorated \\
& \quad rittmeister der reserves in the wehrmacht during world war ii . \\
 \bottomrule
    \end{tabular}
     \caption{Random WikiBio samples from \gimodel{} (top), \ltwort{} (middle), and S2S+copy (bottom).}
    \label{tab:samples2}
\end{table*}      
\end{document}